\def \X {\mathcal{X}}
\def \Y {\mathcal{Y}}
\def \U {\mathcal{U}}
\def \V {\mathcal{V}}
\def \D {\mathcal{D}}
\def \RR {\mathbb{R}}
\def \II {\mathbb{I}}
\date{}
\author{
	Jiaqi Lv$^{1,2}$, Tianran Wu$^{1,2}$, Chenglun Peng$^{1,2}$, Yunpeng Liu$^{1,2}$, Ning Xu$^{1,2}$, Xin Geng$^{1,2}$\thanks{Corresponding author.}\\
	$^1$School of Computer Science and Engineering, Southeast University, Nanjing 210096, China\\
	$^2$Key Laboratory of Computer Network and Information Integration (Southeast University), \\Ministry of Education, China\\
	\texttt{\{lvjiaqi, trwu, pengchenglun, yunpengliu, xning, xgeng\}@seu.edu.cn}\\
}
\title{Compact Learning for Multi-Label Classification}
\begin{document}

\maketitle

\begin{abstract}
Multi-label classification (MLC) studies the problem where each instance is associated with multiple relevant labels, which leads to the exponential growth of output space. MLC encourages a popular framework named \emph{label compression} (LC) for capturing label dependency with dimension reduction.  Nevertheless, most existing LC methods failed to consider the influence of the feature space or misguided by original problematic features, so that may result in performance degeneration. In this paper, we present a \emph{compact learning} (CL) framework to embed the features and labels \emph{simultaneously and with mutual guidance}. The proposal is a versatile concept, hence the embedding way is arbitrary and independent of the subsequent learning process. Following its spirit, a simple yet effective implementation called \emph{compact multi-label learning} (CMLL) is proposed to learn a compact low-dimensional representation for both spaces. CMLL maximizes the dependence between the embedded spaces of the labels and features, and minimizes the loss of label space recovery concurrently. Theoretically, we provide a general analysis for different embedding methods. Practically, we conduct extensive experiments to validate the effectiveness of the proposed method.
\end{abstract}

\section{Introduction}\label{sec:intro}

\emph{Multi-label classification} (MLC) \cite{a2014zhang} is one of the mostly-studied machine learning paradigms, owing its popularity to its capability to fit the pervasive real-world tasks. It allows each instance to be equipped with multiple relevant labels for explicitly expressing the rich semantic meanings simultaneously. Nowadays MLC has been the prime focus due to its vast potential applications such as image annotation \cite{liu2017semantic, liu2018svm}, face recognition \cite{zhao2016joint, zhuang2018multi}, text categorization \cite{liu2017deep, zhang2018multi}, etc. 

Formally speaking, let $\X\subseteq\RR^D$ be the instance space and $\Y=[M]$ be the label space, where $D$ is the feature space dimension, $[M]:=\{1,2,\ldots,M\}$ and $M>2$ is the number of classes. The multi-label training set is represented as $\D=\{(\bm x_i,Y_i)\in\X\times\Y\}_{i=1}^N$ consisting of a $D$-dimensional instance $\bm x_i\in\X$ and the associated label set $Y_i\subseteq\Y$. MLC aims to induce a multi-label classifier $g:\X\rightarrow\Y$ to assign a set of relevant labels for the unseen instance. 

It is evident that MLC can be regarded as a generalization of traditional single-label learning. However, the generality inevitably leads to the output space grows exponentially as the number of classes increases. Inevitably, the curse of dimensionality in the label spaces becomes one of the major concerns in MLC, which results in many algorithms in low-dimensional space being ineffectiveness. A prominent phenomenon is the \emph{sparsity} of the label space, including \emph{label-set sparsity} and \emph{hypercube sparsity} \cite{multi2012tai}. The former means that the instances are usually associated with very few relevant labels compared to the label dimensionality, while the latter means that compared to all possible label combinations (i.e., the power sets of the label space), only a few are covered by the limited training data. For example, suppose an image annotation task with $50$ candidate labels, an image could often be related to no more than ten objects (i.e., label-set sparsity) and the collected training examples is far less than $2^{50}$ possible label combinations (i.e., hypercube sparsity). When the label space is considerably large, most of the conventional MLC algorithms become computationally inefficient, let alone tends to be corrupted by noisy labeling \cite{wang2019ranking}. Therefore, exponentialsized output space is still one of the major challenges for MLC methods.

There are many attempts to address the challenge, where \emph{label compression} (LC) \cite{multi2009hus,rai2015large,learning2017yeh,kumar2019group} is the dominant strategy. LC embeds the original high-dimensional label space into a low-dimensional subspace so as to gain a tighter label representation, followed by the association between the feature space and embedded label space for the learning purpose. By compression, problems such as redundancy and sparsity existing in the original label space can be alleviated to some extent, and also reduce the computational and space complexities.

While almost all LC methods only concentrate on the embedding of labels while keeping the features unchanged. Most of them \cite{multi2009hus,multi2012tai,multi2012wicker,multi2015li} totally ignored the influence of the features, which causes the loss of discriminant information and hurts the specific classification purposes. There are only few initial efforts in jointly utilizing the feature space for LC \cite{feature2012chen,multi2014lin}. However, how to learn the representative and discriminative features is still challenging, therefore problems such as noise and redundancy may also exist in the original features and mislead the label embedding. Correspondingly, some supervised \emph{feature embedding} (FE) methods \cite{canonical2003hardoon,multi2010zhang} have been proposed to focus on embedding the features into a new space. It is acknowledged that the importance of the same feature in different learning tasks may be inconsistent, thus the FE process should also be guided by the label space. The separate embedding of a single space driven by another problematic space provokes the propagation and accumulation of errors. 

In light of the above observation, in this paper, we focus on studying a general framework that co-embeds the two spaces in the MLC case. First we argue that:
\begin{itemize}[topsep=0ex,itemsep=-1ex]
	\item
	The embedding process of the label space and the feature space should be \emph{linked} to each other and performed \emph{simultaneously};
	\item
	The embedding process of one space should be guided by another \emph{well-disposed} space rather than the original problematic space.
\end{itemize}
Such a framework is named \emph{Compact Learning} (CL) in the sense that the learning is based on the two compact spaces. Then, to this end, we propose a simple yet effective algorithm called \emph{Compact Multi-Label Learning} (CMLL). CMLL aims to learn a more compact representation for both labels and features by maximizing the dependence between the embedded spaces of the labels and features, and simultaneously minimizing the recovery loss from the embedded labels to the original ones. In this way, the embedding processes of the two spaces are seamless and mutually guided, regardless of what methods are used in the learning process. We conduct comprehensive experiments over twelve benchmark datasets to validate the effectiveness of CMLL in improving the classification performance. 

The rest of the paper is organized as follows. Section~\ref{sec:related} briefly discusses the related work. Section~\ref{sec:cmll} presents the technical details of the proposed CMLL approach. Then, Section~\ref{sec:theory} conducts some theoretical analyses on LC and CL framework. Section~\ref{sec:exp} reports the experimental results. At last, Section~\ref{sec:conclude} concludes this paper.

\section{Related Work} \label{sec:related}

In this section, we mainly introduce the LC framework and review the important works in the field of LC. LC is a popular strategy for MLC where the target is to embed the original labels into a low-dimensional latent space. Generally speaking, LC consists of the following three processes:
\begin{enumerate}[topsep=0ex,itemsep=-1ex]
	\item Encoding/embedding process: It embeds the original label vectors into a compressed space through a specific transformation $e:\Y\to\V$, where $\V=[m] (m\ll M)$ is the embedded $m$-dimensional label space.
	\item Learning process. It induces a multi-label classifier from the feature space to the embedded space $g':\X\to\V$.
	\item
	Decoding/recovery process: It recovers the original labels from the embedded label space via a decoder $d:\V\to\Y$.
\end{enumerate}
For a new instance $\bm{x}$, the predicted labels $\widehat{Y}$ is: $\widehat{Y}=d(g'(\bm{x}))$. 

Most LC methods learned the embedding labels with the feature unchanged. \cite{multi2009hus} took one of the initial attempts to conduct LC via compressive sensing, which is time-consuming in the decoding process since it needs to solve an optimization problem for each new instance. Unlike compressive sensing, \cite{multi2012tai} proposed a principle label space transformation (PLST) method, which is essentially a principal components analysis in the label space. Then, based on canonical correlation analysis \cite{sun2010canonical}, Conditional PLST (CPLST) \cite{feature2012chen} and CCA-OC \cite{multi2011zhang} improved PLST from the point of feature information. \cite{dependence2015zhang} put forward a method to maximize the dependence between features and embedding labels. Some LC methods also apply the randomized techniques to speed up the computing \cite{joly2014random,fast2015mineiro}.

Instead of adopting the linear mapping, another kind of LC methods reduced the label-space dimensionality via a nonlinear mapping. \cite{multi2015li} applied the kernel trick to the label space. \cite{semi2015jing} added a trace norm regularization to identify the low-dimensional representation of the original space. To address the unsatisfactory accuracy caused by the violation of low rank assumption, \cite{bhatia2015sparse} learned a small ensemble of local distance preserving embeddings which non-linearly captured label correlations. \cite{rai2015large} presented a scalable Bayesian framework via a non-linear mapping. \cite{multi2016jian} decomposed the original label space to a low-dimensional space to reduce the noisy information in the label space. \cite{a2016wicker} proposed a model that compresses the labels by autoencoders and then used the same structure to decompress the labels, which is able to capture non-linear label dependencies. 

Previous researches on embedding mostly require an explicit encoding function for mapping the original labels to the embedding labels. However, since the optimal mapping can be complicated and even indescribable, assuming an explicit encoding function may not model it well. Unlike most previous works, some methods make no assumptions about the encoding process but directly learn a code matrix. \cite{multi2012wicker} proposed a method to perform LC via boolean matrix decomposition and \cite{multi2014lin} proposed a feature-aware implicit label space encoding method.

Quite different from the approach of existing LC methods, we propose CMLL following the spirit of CL. There are several quite related works have been proposed. From now on, we discuss the main differences between our proposal with them, and later in Section~\ref{sec:exp}, we experimentally validate our superiority. One is canonical correlated autoencoder (C2AE) proposed in \cite{learning2017yeh}, which performed joint feature and label embedding by deriving a deep latent space. The learned latent embedding space is shared by feature and label, thus C2AE restricted the embedded features and labels to have the same dimension. And the tasks of label embedding and multi-label prediction are integrated into the same framework. Another related work is co-hashing (CoH) proposed in \cite{shen2018compact}, which also learned a common latent hamming space to align the input and output for applying $k$-nearest neighbor ($k$NN) for predicting. 
Both of these two methods can be categorized as CL, but they are coupled with some specifical learning algorithms. While in our work, CMLL learns two respective subspaces for the features and the label. The classifier learned in the learning process is independent of the encoder and the decoder, so that any parametric or non-parametric learning model is compatible. In addition, the compression ratio of the two space in C2AE and CoH are mutual influenced and restricted. Different from them, CMLL produces respective compressed representations, which is more flexible on the compression ratio of each space.

\section{Proposed Algorithm}\label{sec:cmll}

The procedures of CMLL are quite similar to that of LC, bur CMLL needs to simultaneously learn another mapping for features, i.e., $e':\X\to\U$, where $\U=\RR^d$ is the embedded $d$-dimensional feature space ($d\le D$). For an unseen instance $\bm{x}_u$, the corresponding recovered relavant labels $\widehat{Y}_u$ outputed by CMLL are: $\widehat{Y}_u=d(g(e'(\bm{x}_u)))$, where $g:\U\to\V$ is a multi-label classifier.

In most cases, the classifier learned by a MLC system is a real-valued function \cite{a2014zhang} $f:\X\to\RR^c_+$, where $f(\bm x)$ can be regarded as the confidences of the labels being the relevant labels of $\bm x$. Specifically, given a multi-label example $(\bm x, Y)$, if $i\in Y, j\notin Y$, the $i$-element in $f(\bm x)$ should be larger than the $j$-element in $f(\bm x)$. Note that the multi-label classifier $g(\cdot)$ can be derived from the real-valued function $f(\cdot)$ via: $g(\bm x)=\II(f(\bm x),\delta)=\{i|f(\bm x)_i>\delta\}$, where $\delta\in (0,1)$ is a threshold and $f(\bm x)_i$ is the $i$-element of $f(\bm x)$.

\subsection{The Objective of CMLL}

As mentioned above, for boosting the performance, we should make the instances more \emph{predictable} in the learning process and the embedded label vectors more \emph{recoverable} in the decoding process. In this section, we propose a simple yet effective instantiation CMLL of CL framework.

It has been widely acknowledged that strong correlation usually leads to better predictability, hence CMLL maximizes the dependence between the embedded label space $\V$ and the embedded feature space $\U$. At the same time, CMLL minimizes the recovery loss from $\Y$ to $\V$. Let $\bm{X}\in\RR^{N\times D}$ be the feature matrix and $\bm{Y}\in\{0,1\}^{N\times M}$ be the corresponding label matrix. Given the training dataset $S=\{\bm{X},\bm{Y}\}$, we denote $\Omega(\bm{V},\bm{Y})$ as the recovery loss and $\Theta(\bm{V},\bm{U})$ as the measure of dependence, where $\bm{V}^{N\times m}$ is the embedded label matrix and $\bm{U}^{N\times d}$ is the the embedded feature matrix. Then, the objective can be formulized as follows.
\begin{equation}\label{eq:gamma}
\max \limits_{\bm{U},\bm{V}} ~ \alpha\Theta(\bm{V},\bm{U}) - \Omega(\bm{V},\bm{Y}),
\end{equation}
where $\alpha$ is a hyper-parameter that balances the importance of the dependence and the recovery loss. Next we discuss these two terms in Eq.~\ref{eq:gamma} respectively with the concrete form.

CMLL utilizes Hilbert-Schmidt Independence Criterion (HSIC) \cite{hsic05,multi2010zhang} as its dependence measurement due to its simple form and theoretical properties such as exponential convergence. HSIC calculates the squared norm of the cross-covariance operator over the domain $\X\times\Y$ in reproducing kernel Hilbert spaces. An empirical estimate of HSIC can be described as:
\begin{equation}
HSIC(\X, \Y) = (N-1)^{-2}~tr[\bm{HKHL}],
\end{equation}
where $tr[\cdot]$ denotes the trace of a matrix, $\bm{H} = \bm{I}-\frac{1}{N}\bm{ee}^{T}$, $~\bm{e}^{N\times 1}$ is an all-one vector, and $\bm{I}^{N\times N}$ is the unit matrix. $\bm{K}_{ij} = k(\bm{x}_{i}, \bm{x}_{j}) = \langle\phi(\bm{x}_{i}), \phi(\bm{x}_{j})\rangle$ and $\bm{L}_{ij} = l(\bm{y}_{i}, \bm{y}_{j}) = \langle\varphi(\bm{y}_{i}), \varphi(\bm{y}_{j})\rangle$, where $\langle\cdot\rangle$ represents the inner product operation, $k(\cdot)$ and $l(\cdot)$ are the kernel functions, and $\phi(\cdot)$ and $\varphi(\cdot)$ are the corresponding mapping functions. Dropping the normalization term of HSIC and applying it to CMLL, the measure of dependence can be represented as:
\begin{equation} \label{eq:depen0}
\Theta(\bm{V},\bm{U}) = tr[\bm{HKHL}],
\end{equation}
where $\bm{K}=\bm{U}\bm{U}^t$ and $\bm{L}=\bm{V}\bm{V}^t$. Here we first consider the linear embedding of features in CMLL, i.e. let $\bm{U}=\bm{XP}$, where $\bm{P}^{D\times d}$ is the learnt projection matrix. The kernel version of CMLL with non-linear feature embedding will also be derived later. Constraining the basis of the projection matrix to be orthonormal, we derive:
\begin{equation} \label{eq:depen}
\begin{aligned}
\Theta(\bm{V},\bm{U}) &= tr[\bm{HXP}\bm{P}^{t}\bm{X}^{t}\bm{H}\bm{V}\bm{V}^t] \\
&s.t. ~~ \bm{P}^t\bm{P}=\bm{I}.
\end{aligned}
\end{equation}

As to the second term of Eq.~(\ref{eq:gamma}), in order to minimize the loss of recovery, CMLL searches a decoding matrix $\bm{W}^{m\times M}$ through the ridge regression \cite{Hastie2004} to conduct a linear decoding. That is,
\begin{equation} \label{eq:omega}
\begin{aligned}
&\Omega(\bm{V},\bm{Y},\bm{W}) = \parallel \bm{Y}-\bm{\bm{V}W} \parallel^2_{F}+\lambda \parallel \bm{W} \parallel^2_F ,
\end{aligned}
\end{equation}
where $\parallel\cdot\parallel_{F}$ means the Frobenious norm and $\lambda$ is the coefficient of the regularization term that avoids overfitting. Given the specific $\bm{V}$ and $\bm{Y}$, the goal is to find the $\bm{W}$ to minimize $\Omega(\bm{V},\bm{Y},\bm{W})$. To avoid redundancy in the embedded label space, we assume that the components of the embedded space are orthonormal and uncorrelated, i.e.,
$\bm{V}^t\bm{V} = \bm{I}$. Then, let the partial derivative of $\Omega(\bm{V},\bm{Y},\bm{W})$ with respect to $\bm{W}$ be zero:
\begin{equation}
\begin{aligned}
\frac{\partial \Omega}{\partial \bm{W}}
&= \frac{tr[\bm{YY}^t + \bm{WW}^t \bm{V}^t\bm{V} - 2\bm{W}^t\bm{V}^t\bm{Y} + \lambda \bm{W}^t\bm{W} ]}{\partial \bm{W}} \\
&= 2\bm{V}^t\bm{V}\bm{W} - 2\bm{V}^t\bm{Y} + 2\lambda\bm{W} = 0.
\end{aligned}
\end{equation}
We can obtain:
\begin{equation}\label{eq:w}
\bm{W}=(\bm{V}^t\bm{V}+\lambda \bm{I})^{-1} \bm{V}^t \bm{Y} = \frac{1}{1+\lambda} \bm{V}^t\bm{Y}.
\end{equation}
Substituting Eq.~(\ref{eq:w}) into Eq.~(\ref{eq:omega}) and dropping unrelated items, we yield:
\begin{equation} \label{eq:recover}
\begin{aligned}
\Omega(\bm{V},\bm{Y}) &= -\frac{1}{1+\lambda} tr[ \bm{Y}^t \bm{V}\bm{V}^t \bm{Y}]
\\
& s.t. ~~ \bm{V}^t\bm{V}=\bm{I}.
\end{aligned}
\end{equation}

Then substituting Eq.~(\ref{eq:recover}) and Eq.~(\ref{eq:depen}) into Eq.~(\ref{eq:gamma}), the terms in the objective can be derived as follows.
\begin{equation}
\begin{aligned} \label{eq:derive}
& \alpha ~tr[\bm{H}\bm{U}\bm{U}^t\bm{H}\bm{V}\bm{V}^t] + \frac{1}{1+\lambda} tr[\bm{Y}^t \bm{V}\bm{V}^t \bm{Y}] \\
\Leftrightarrow &\alpha(1+\lambda) ~ tr[\bm{V}^t\bm{H}\bm{U}\bm{U}^t\bm{H}\bm{V}] +  tr[\bm{V}^t\bm{YY}^t\bm{V}] \\
\Leftrightarrow &
~ tr[\bm{V}^t (\beta \bm{H}\bm{U}\bm{U}^t\bm{H} + \bm{YY}^t)\bm{V}],
\end{aligned}
\end{equation}
where $\beta = {\alpha(1+\lambda)}$ is the normalized balance parameter. Adding the corresponding constraints, the learning objective becomes:
\begin{equation}
\begin{aligned} \label{eq:dual}
&\max\limits_{\bm{V},\bm{P}} ~ tr[\bm{V}^t ( \beta\bm{HXPP^t X^t H} + \bm{YY}^t)\bm{V}]  \\
&s.t. ~~~ \bm{V}^t\bm{V}=\bm{I}, ~\bm{P}^t\bm{P}=\bm{I}.
\end{aligned}
\end{equation}

\subsection{Solution for CMLL}\label{Solution of CMLL}

We solve Eq.~(\ref{eq:dual}) by alternating minimization. In each
iteration, fixing one of $\{\bm{P},\bm{V}\}$ and updating the other with coordinate ascent \cite{Hastie2004}, in which way a close-form solution can be obtained during each iteration.

To be specific, when $\bm{P}$ is fixed, the problem can be converted into an eigen-decomposition problem after applying the Lagrangian method. Let $\bm{A}=(\beta \bm{H}\bm{U}\bm{U}^t\bm{H}+\bm{YY}^t)$, the eigen-decomposition problem can be specified as:
\begin{equation}
\begin{aligned}  \label{eq:eigen}
& \max\limits_{\bm{V}}~ \sum_{j=1}^{m}\gamma_{j} \\
&s.t.~~~ \bm{A}\bm{V}_{.j} = \gamma_{j}\bm{V}_{.j}, ~\bm{V}_{.i}^t\bm{V}_{.j}= \II(i=j),
\end{aligned}
\end{equation}
where $\bm{V}_{.j}$ is the $j$-th column of $\bm{V}$, and $\gamma_{j}$ means the eigenvalue. The optimal $\bm{V}$ consists of $m$ normalized eigenvectors corresponding to the top $m$ largest eigenvalues of $\bm{A}$. Notice that $m$ is usually much smaller than $M$, so we can utilize some iterative approaches such as Arnoldi iteration \cite{iter96} to speed computation, which can reach a minimal computational complexity of $\mathcal{O}(Nm^2)$. When $\bm{V}$ is fixed, the optimal $\bm{P}$ consists of $d$ normalized eigenvectors corresponding to the top $d$ largest eigenvalues of $\bm{B} = \bm{X}^{t} \bm{H}\bm{V}\bm{V}^{t} \bm{HX}$.

The procedures of CMLL are summarized in Algorithm \ref{alg:CMLL}. It is interesting to note that if we replace $\bm{V}$ with $\bm{Y}$ in $\bm{B}$, regardless of the embedding for the labels, the solution for $\bm{P}$ is actually the same as MDDM \cite{multi2010zhang}, a typical FE method for MLC. And if we replace $\bm{U}$ with $\bm{X}$ in $\bm{A}$, a standard LC algorithm regardless of the embedding for the features can be derived, which we named as CMLL$_y$. Both MDDM and CMLL$_y$ can be viewed as the special case of CMLL.

\renewcommand{\algorithmicrequire}{\textbf{Input:}}
\renewcommand{\algorithmicensure}{\textbf{Output:}}
\begin{algorithm}[H]
	\caption{CMLL}
	\label{alg:CMLL}
	\begin{algorithmic}[1]
		\REQUIRE Training dataset $S=\{\bm{X},\bm{Y}\}$, testing feature matrix $\bm{X}_{test}$, parameter $\beta,\lambda$, dimensionality of the embedded label space $m$ and feature space $d$, maximal iteration count $maxc$, toleration $tol$.
		\ENSURE Predicted label matrix $\widehat{\bm{Y}}_{pre}$.
		
		\STATE Initialize $j=0,\bm{V}^0_{N*m}$, $\bm{P}^0_{D*d}$ with a random matrix.
		\STATE Get $\Gamma^0 = tr[\bm{V}^0 (\beta \bm{HXP}^0(\bm{P}^0)^{t}\bm{X}^{t}\bm{H} + \bm{YY}^t )\bm{V}^0]$.
		
		\REPEAT
		\STATE Get $\bm{A}^{j+1} = \beta \bm{HXP}^{j+1}(\bm{P}^{j+1})^t\bm{X}^t\bm{H}+\bm{YY}^t$, then obtain $\bm{V}^{j+1}$ via eigen-decomposition.
		\STATE Get $\bm{B}^{j+1} = \bm{X}^{t} \bm{H}\bm{V}^{j+1}(\bm{V}^{j+1})^{t} \bm{HX}$, then obtain $\bm{P}^{j+1}$ via eigen-decomposition.
		\STATE Get $\Gamma^{j+1}$ using $\bm{P}^{j+1}$ and  $\bm{V}^{j+1}$
		\STATE Compute $\Delta= |\Gamma^{j+1} - \Gamma^{j}|~/~(\Gamma^{j})$.
		\STATE Let $j=j+1$, $\bm{P}=\bm{P}^{j}, \bm{V}=\bm{V}^{j}$.
		\UNTIL ($j>maxc$) ~or~ ($\Delta<tol$)
		
		\STATE Compute $\bm{W}=\frac{1}{1+\lambda} \bm{V}^t\bm{Y}$.
		\STATE Learn the classifier: $g:\bm{XP}\to\bm{V}$.
		\STATE Conduct prediction: $\bm{V}_{pre}=g(\bm{X}_{test}\bm{P})$.
		\STATE Perform decoding: $\widehat{\bm{Y}}_{pre}=\II(\bm{V}_{pre}\bm{W},\delta)$.
	\end{algorithmic}
\end{algorithm}

\subsection{Kernelization for CMLL}
\label{sec:kerenl}
We can utilize kernel tricks to extend CMLL to the non-linear case, denoted by k-CMLL. Assume the projection matrix $\bm{P}$ can be spanned by kernel feature vectors, i.e. $ \bm{P} = \Phi(\bm{X}) \bm{R}^{N\times d} $, where $\Phi(\bm{X}) = [\phi(\bm{x}_{1}),\phi(\bm{x}_{2}),..., \phi(\bm{x}_{N} )]$, and $\phi(.)$ is the projection function corresponding to the kernel and $\bm{R}$ is the matrix of the corresponding linear combination coefficients.

Let $q(\bm{x}_{i}, \bm{x}_{j})=\langle\phi(\bm{x}_{i}), \phi(\bm{x}_{j})\rangle$ be the chosen kernel function and $\bm{Q}=\Phi(\bm{X})^{t} \Phi(\bm{X})$ be the kernel matrix.
Then, $\bm{U}=\Phi(\bm{X})^{t} \bm{P} = \bm{Q} \bm{R}, ~\bm{K}=\bm{U}\bm{U}^t = \bm{QRR}^{t}\bm{Q}$, and the constraint $\bm{P}^t\bm{P} = \bm{R}^{t} \bm{Q} \bm{R} = \bm{I}$.
So the objective of the kernel CMLL becomes:
\begin{equation}
\begin{aligned} \label{eq:kernel}
&\max\limits_{\bm{O,R}} ~  tr[\bm{V}^t ( \beta \bm{H}\bm{QRR}^{t}\bm{Q}\bm{H} + \bm{YY}^t )\bm{V}], \\
&s.t. ~~~\bm{V}^t\bm{V}=\bm{I}, ~\bm{R}^{t} \bm{Q}\bm{R} = \bm{I}.
\end{aligned}
\end{equation}

The solution for k-CMLL is similar to that of the linear case. When $\bm{R}$ is fixed, the optimal $\bm{V}$ consists of the top $m$ eigenvectors of $\bm{A'}=\beta \bm{H}\bm{QRR}^{t}\bm{Q}\bm{H} + \bm{YY}^t$. And when $\bm{V}$ is fixed, the optimal $\bm{R}$ consists of the top $d$ generalized eigenvectors of $\bm{B'}=\bm{QH}\bm{V}\bm{V}^{t} \bm{HQ}$ and $\bm{Q}$. Given an unseen instance $\bm{x}$, the projection is $\bm{z} = \bm{P}^{t} \phi(\bm{x})  = \bm{R}^{t} q(\bm{X},\bm{x})$, where $q(\bm{X},\bm{x}) = [q(\bm{x_}{1}, \bm{x}), q(\bm{x}_{2}, \bm{x}), \cdots, q(\bm{x}_{N}, \bm{x}) ]^{t}$.

\section{Theoretical Analysis}\label{sec:theory}
This section will conduct a general theoretical analysis for different embedding strategies and compare them basing on the proposed Theorem \ref{thm1}.

\newtheorem{thm}{Theorem}
\begin{thm}
	\label{thm1}
	Given an instance $\bm{x}$ in a multi-label dataset $S$, denote $\bm{y}$ as its true label vector, and $\bm{\widehat{y}}$ as its predicted real-valued label vector obtained via a specific embedding framework. Assume a fixed threshold $\delta \in (0,1)$ is used in the final step to binarize $\bm{\widehat{y}}$. Denote $n_{mis}$ as the number of misclassified labels, then $n_{mis}$ is upper-bounded by:
	\begin{equation*}
	n_{mis} \le ~ \tau {\parallel \widehat{\bm{y}} - \bm{y} \parallel}^2,
	~~~~\tau = ~max ~ \{ \frac{1}{\delta^2},\frac{1}{(1-\delta)^2} \} .
	\end{equation*}
\end{thm}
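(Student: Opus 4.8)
The plan is to decompose the squared error $\|\widehat{\bm{y}}-\bm{y}\|^2=\sum_i(\widehat{y}_i-y_i)^2$ and to show that every misclassified coordinate forces its term in this sum to be bounded below by a fixed positive constant. Since the remaining terms (from correctly classified labels) are nonnegative, discarding them only weakens the lower bound in the right direction, so the whole argument reduces to a coordinate-wise case analysis keyed on the binarization rule $g(\bm x)=\II(f(\bm x),\delta)$, under which coordinate $i$ is predicted relevant precisely when $\widehat{y}_i>\delta$.

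First I would fix a misclassified index $i$ and split into the two failure modes. If $y_i=1$ (truly relevant) but the prediction is irrelevant, then $\widehat{y}_i\le\delta$, so $y_i-\widehat{y}_i=1-\widehat{y}_i\ge 1-\delta$ and hence $(\widehat{y}_i-y_i)^2\ge(1-\delta)^2$. Symmetrically, if $y_i=0$ (truly irrelevant) but predicted relevant, then $\widehat{y}_i>\delta$, giving $(\widehat{y}_i-y_i)^2=\widehat{y}_i^2>\delta^2$. Either way the contribution of a misclassified coordinate is at least $\min\{\delta^2,(1-\delta)^2\}$.

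Next I would sum over the misclassified indices only, obtaining
\[
\|\widehat{\bm{y}}-\bm{y}\|^2\ \ge\ \sum_{i\text{ misclassified}}(\widehat{y}_i-y_i)^2\ \ge\ n_{mis}\cdot\min\{\delta^2,(1-\delta)^2\}.
\]
Rearranging and using the identity $1/\min\{\delta^2,(1-\delta)^2\}=\max\{1/\delta^2,1/(1-\delta)^2\}=\tau$ then yields the claimed bound $n_{mis}\le\tau\,\|\widehat{\bm{y}}-\bm{y}\|^2$.

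I do not anticipate a genuine obstacle, since the argument is an elementary per-coordinate estimate; the only point requiring care is bookkeeping the direction of the threshold inequality in each case (strict versus non-strict at $\widehat{y}_i=\delta$) so that the two lower bounds $(1-\delta)^2$ and $\delta^2$ are both correctly justified, together with observing that $\delta\in(0,1)$ keeps both constants strictly positive so that $\tau$ is finite and the final division is legitimate.
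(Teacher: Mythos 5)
Your proof is correct and takes essentially the same approach as the paper: both rest on the per-coordinate observation that a false positive contributes at least $\delta^2$ and a false negative at least $(1-\delta)^2$ to the squared error, with correctly classified coordinates contributing nonnegatively. The paper merely writes the argument top-down (bounding each misclassification indicator by $(\widehat{y}_i-y_i)^2/\delta^2$ or $(\widehat{y}_i-y_i)^2/(1-\delta)^2$ and summing over all coordinates), whereas you write it bottom-up (lower-bounding $\|\widehat{\bm{y}}-\bm{y}\|^2$ by $n_{mis}\cdot\min\{\delta^2,(1-\delta)^2\}$ and dividing); the two are algebraically equivalent.
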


\begin{proof}
	Denote $\widehat{\bm{y}}^i$ as the $i$-th dimension of $\widehat{\bm{y}}$, and $\bm{y}^i$ as the $i$-th dimension of $\bm{y}$, then
	\begin{align}\label{eq:thm0}
	& n_{mis} = \sum_{i=1}^{M} \II( \widehat{\bm{y}}^i \ge \delta ) \II( \bm{y}^i = 0 ) + \II( \widehat{\bm{y}}^i < \delta ) \II( \bm{y}^i = 1 ) \\\nonumber
	&\le  \sum_{i=1}^{M} \frac{{( \widehat{\bm{y}}^i - \bm{y}^i )}^2 \II( \bm{y}^i = 0 )}{\delta^2}  + \frac{{( \widehat{\bm{y}}^i - \bm{y}^i )}^2 \II( \bm{y}^i = 1 )}{(1-\delta)^2} \\\nonumber
	&\le \sum_{i=1}^{M}  max\{ \frac{1}{\delta^2},\frac{1}{(1-\delta)^2} \} ~~ {( \widehat{\bm{y}}^i - \bm{y}^i )}^2 \\\nonumber
	& = \tau {\parallel \widehat{\bm{y}} - \bm{y} \parallel}^2. 
	\end{align}
\end{proof}

Notice that when $\delta = 0.5$, the condition of Eq.~(\ref{eq:thm0}) inequality taking mark of equality is satisfied so that $\tau$ reaches the minimum value $4$. For MLC problem, it is a common practice that FE, LC and CL all binarize the real-valued output by $\II(\cdot,\delta)$. From now on, we can make an preliminary analysis on the error upper bounds of these different embedding frameworks based on Theorem~\ref{thm1}. Assume that $h:\U\to\Y$, a natural result inferred directly from the Theorem \ref{thm1} is that:
\begin{equation*}
\begin{aligned}
&Z_{FE} = ~\tau~ {\parallel h(e'(\bm{x})) - \bm{y} \parallel}^2,   \\
&Z_{LC} = ~\tau~ {\parallel d(g'(\bm{x})) - \bm{y} \parallel}^2,   \\
&Z_{CL} = ~\tau~ {\parallel d(g(e'({\bm{x}}))) - \bm{y}] \parallel}^2,
\end{aligned}
\end{equation*}
where $Z_{FE},Z_{LC},Z_{CL}$ are the error upper bound, i.e., the number of misclassified labels for the instance $\bm x$ by FE, LC, CL, respectively. The common purpose is to minimize the error bound by a embedding framework.

While in practical implementation, LC usually formalizes $Z_{LC}$ as an equivalent form:
\begin{equation}
\label{eq:ulsdr2}
Z_{LC}' = \tau {\parallel [ d(g'(\bm{x})) - d(e(\bm{y})) ] + [d(e(\bm{y})) - \bm{y}] \parallel}^2,
\end{equation}
because directly minimizing the original $Z_{LC}$ is not intuitional and feasible when designing a concrete algorithm \cite{multi2012tai,feature2012chen,dependence2015zhang,multi2014lin,multi2015li}. That is, LC tries to make the encoded labels $e(\bm{y})$ more predictable for $\bm{x}$ and more recoverable to $\bm{y}$. We can specify this general error bound for a specific method. For example, substituting the concrete form of PLST to Eq.~(\ref{eq:ulsdr2}) yields:
\begin{equation}
Z_{PLST}={4 \parallel [g'(\bm{x}) - \bm{yO}]\bm{O}^{t} + [ \bm{yOO}^{t} - \bm{y} ] \parallel}^2,
\end{equation}
where $\bm{O}^{M\times m}$ is the orthonormal projection matrix learnt by PLST. Note that $Z_{PLST}$ derived above can be transformed into the form derived in \cite{multi2012tai} through some equivalent conversion.

Similarly, $Z_{CL}$ can be formalized as follows:
\begin{equation}
\label{eq:ucmll2}
Z'_{CL} = \tau {\parallel [d(g(e'({\bm{x}}))) - d(e(\bm{y})) ] + [d(e(\bm{y})) - \bm{y}] \parallel}^2.
\end{equation}
Compared to FE and LC, CL considers the transformation for both label and feature space simultaneously, and thus provides a greater possibility as well as a more flexible and superior way to make upper bound tighter. Think of a specific example CMLL, the error bound can be expressed as:
\begin{equation}
{4 \parallel [g(\bm{u}) - \bm{v}^tW] + [ \bm{v}^tW - \bm{y} ] \parallel}^2.
\end{equation}
The derivation process of CMLL shows that CMLL indeed takes both terms into account, i.e., minimizes the first term by maximizing the dependence between the two embedded spaces, and at the same time minimizes the recovery loss that measures how well $\bm{v}^tW$ approximates $\bm y$. Besides, CL can also degenerate to FE or LC when necessary, as the example of special cases of CMLL (i.e. MDDM and CMLL$_{\bm{y}}$) indicates.

The upper bound derived here seems loose because it aims at embedding strategies rather than any concrete algorithm. There are few research on the analysis of the framework of embedding yet, although many related methods have been proposed. This section makes an initial attempt to analyze the reasonability of CL as well as LC and FE, on which existing LC methods can be explained/derived based. It provides guidance on the aspects that should be considered when designing a new CL or LC algorithm.

\section{Experiments}\label{sec:exp}
\subsection{Datasets}
Aiming to validate the effectiveness of CMLL, we conduct experiments on a total of twelve public real-world multi-label datasets, which show obvious label sparsity. The dataset of espgame collected here is organized by \cite{multi2014lin}, and other small-scale datasets (the number of examples is less than 5000) can be downloaded from Mulan \footnote{http://mulan.sourceforge.net/datasets-mlc.html} and Meka \footnote{http://meka.sourceforge.net/\#datasets}. In addition, the extreme multi-label learning \cite{wei2019learning} which aims to learn relevant labels from an extremely large label set is a possible application domain of LC, thus, we also adopt two extreme classification datasets Mediamill and Delicious \footnote{http://manikvarma.org/downloads/XC/XMLRepository.html}.
Table~\ref{tb:datasets} summarizes the detailed characteristics of these datasets, which are organized in ascending order of the number of examples. Cardinality means the average number of relevant labels per instance, Density is the ratio of Cardinality to the number of classes, and Distinct is the number of distinct label combinations contained in the dataset. As indicated by quite small values of Density and Distinct compared to all the possible label combinations (i.e. $2^{\#Label}$), all datasets suffer from evident hypercube sparsity or label-set sparsity in the label space.

\begin{table*}[!t]
	\begin{center}
		\caption{Characteristics of the real-world multi-label datasets.}  \label{tb:datasets}
		\vskip 0.15in
		\begin{small}
		\setlength{\tabcolsep}{2mm}{
			\begin{tabular}{ c|cccccccc}
				\toprule
				Datasets & \#Label & \#Feature & \#Example & Feature Type & Cardinality & Density & Distinct & Domain \\
				\midrule
				plant & 12 & 440 & 978 & numeric &  1.079 & 0.090 & 32 &  biology\\ \hline
				msra & 19 & 898 & 1,868 & numeric & 6.315 & 0.332 & 947 & images \\ \hline
				enron & 53 & 1,001 & 1,702 & nominal & 3.378 & 0.064 & 753 & text \\ \hline
				llog & 74 & 1,004 & 1,460 & nominal & 1.128 & 0.015 & 286 & text \\ \hline
				
				bibtex & 159 & 1,836 & 5,000 & nominal & 2.397 & 0.015 & 2,127 & text\\ \hline
				eurlex-sm & 201 & 5,000 & 5,000 & numeric & 2.224 & 0.011 & 1,236 & text \\ \hline
				bookmarks & 208 & 2,150 & 5,000 & nominal & 2.016 & 0.010 & 1,840 & text \\ \hline
				corel5k & 374 & 499 & 5,000 & nominal & 3.522 & 0.009 & 3,175 &  images\\ \hline
				eurlex-dc & 412 & 5,000 & 5,000 & numeric & 1.296 & 0.003 & 859 & text \\ \hline
				espgame & 1,932 & 516 & 5,000 & numeric & 4.689 & 0.002 & 4,734 & images \\ \hline
				Delicious & 983 & 500 & 16,105 & numeric & 19.020 & 0.002 & 15,806 & text \\ \hline
				Mediamill & 101 & 120 & 43,907 & numeric & 4.376 & 0.004 & 6,555 & vedio \\ 
				\bottomrule
		\end{tabular}}
	\end{small}
	\end{center}
	\vskip -0.1in
\end{table*}

\subsection{Setups}

We compare CMLL with its special cases CMLL$_{\bm{y}}$, one FE algorithm MDDM \cite{multi2010zhang}, one state-of-the-art large-scale multi-label learning algorithm POP \cite{wei2019learning}, and five well-established LC algorithms: PLST \cite{multi2012tai}, CPLST \cite{feature2012chen}, FaIE \cite{multi2014lin}, DMLR \cite{dependence2015zhang} and C2AE \cite{learning2017yeh}.

The hyper-parameters of the baselines were selected according to the suggested parameter settings in original papers. The balance parameter of CMLL, CMLL$_{\bm{y}}$, FaIE, DMLR was selected from $\{10^{-5}, 10^{-4}, ..., 10^{4}, 10^{5}\}$. POP used Binary Relevance as the base classifier. The hyper-parameter $\lambda$ of CMLL and CMLL$_{\bm{y}}$ was selected from $\{0, 10^{-3}, 10^{-1}\}$, and $tol = 10^{-5}$, $maxc = 50$. And $\delta=0.5$ in the final step for binarizing the real-valued outputs. Besides, we also compared our kernel versions k-CMLL and k-CMLL$_{\bm{y}}$ with the baselines (except PLST and POP that can hardly be extended to the kernel version) and C2AE, which adopted the DNN architectures. And the RBF kernel was applied. Following the previous works \cite{multi2012tai,dependence2015zhang}, we used the ridge regression and the kernel ridge regression to train the learning model for linear case and kernel case, respectively. We denote ORI to represent the classifier learning from the original spaces as the baseline. The regulation parameter of the ridge regression is selected from $\{10^{-5}, 10^{-4}, ..., 10^{-1}\}$. 

Denoting $\mu = \frac{d}{D}, \nu = \frac{m}{M}$ the feature and the label compression ratio, all LC methods run with $\mu$ ranging from $10\%$ to $100\%$ with the interval of $10\%$ while MDDM runs with $\nu$ similarly. CMLL and C2AE run with both $\nu$ and $\mu$. That means, CMLL needs to run with 100 ratio pairs ($10\times10$) in total while C2AE run with 20 ratio pairs ($10+10$). Because C2AE essentially conducts non-linear embedding by utilizing the DNN structure and learns a shared embedded space for both labels and features, while CMLL learns two sub-spaces for labels and features respectively. 

To measure the performance, we use seven widely adopted metrics in multi-label classification, including \emph{Average Precision}, \emph{micro-F1}, \emph{Ranking Loss} and \emph{One Error}. The concrete definition of these metrics can be found in \cite{a2014zhang}. For Mediamill and Delicious, we supplement two metrics popularly used in extreme multi-label learning: \emph{Precision$@3$} and \emph{nDCG$@3$}.

\begin{table*} [tp]
	\huge
	\begin{center}
		\caption{Experimental results of CMLL with baselines.} \label{tb:linear}
		\begin{minipage}{1\linewidth}
			\centering
			\vskip 0.15in
			\resizebox{\textwidth}{9cm}{
				\begin{tabular}{c|ccccccccc}
					\toprule
					
					& \multicolumn{9}{c}{Average Precision $\uparrow$}  \\ \midrule
					Methods & ORI & PLST & CPLST & DMLR & FaIE & CMLL$_{\bm{y}}$ & MDDM & POP & CMLL \\ \hline
					plant             & 0.4756$\pm$0.0263$\bullet$ & 0.4873$\pm$0.0299$\bullet$ & 0.4848$\pm$0.0274$\bullet$ & 0.4859$\pm$0.0250$\bullet$ & 0.4813$\pm$0.0282$\bullet$ & 0.5399$\pm$0.0311 & 0.5587$\pm$0.0294 & 0.5256$\pm$0.0707 & \bf{0.5733$\pm$0.0252} \\
					msra              & 0.7433$\pm$0.0129$\bullet$ & 0.7733$\pm$0.0107$\bullet$ & 0.7689$\pm$0.0117$\bullet$ & 0.7766$\pm$0.0129$\bullet$ & 0.7782$\pm$0.0113$\bullet$ & 0.7820$\pm$0.0112$\bullet$ & 0.7933$\pm$0.0090 & 0.7315$\pm$0.0134$\bullet$ & \bf{0.7997$\pm$0.0100} \\
					enron             & 0.5039$\pm$0.0111$\bullet$ & 0.5183$\pm$0.0162$\bullet$ & 0.5170$\pm$0.0160$\bullet$ & 0.5196$\pm$0.0110$\bullet$ & 0.5218$\pm$0.0123$\bullet$ & 0.6228$\pm$0.0119$\bullet$ & 0.6767$\pm$0.0205 & 0.6360$\pm$0.0867 & \bf{0.6965$\pm$0.0153} \\
					llog              & 0.2235$\pm$0.0274$\bullet$ & 0.2429$\pm$0.0314$\bullet$ & 0.2412$\pm$0.0320$\bullet$ & 0.2760$\pm$0.0328$\bullet$ & 0.2722$\pm$0.0344$\bullet$ & 0.3413$\pm$0.0378$\bullet$ & \bf{0.4264$\pm$0.0216} & 0.1532$\pm$0.0563$\bullet$ & 0.3717$\pm$0.0283 \\
					bibtext           & 0.5278$\pm$0.0091$\bullet$ & 0.5258$\pm$0.0077$\bullet$ & 0.5280$\pm$0.0079$\bullet$ & 0.5281$\pm$0.0086$\bullet$ & 0.5259$\pm$0.0078$\bullet$ & 0.5241$\pm$0.0039$\bullet$ & 0.5287$\pm$0.0091$\bullet$ & 0.5676$\pm$0.0069 & \bf{0.5828$\pm$0.0050} \\
					eurlex-sm         & 0.3972$\pm$0.0208$\bullet$ & 0.3998$\pm$0.0209$\bullet$ & 0.3998$\pm$0.0209$\bullet$ & 0.3979$\pm$0.0208$\bullet$ & 0.4289$\pm$0.0219$\bullet$ & 0.5706$\pm$0.0119$\bullet$ & 0.7299$\pm$0.0198$\bullet$ & 0.3419$\pm$0.0058$\bullet$ & \bf{0.7565$\pm$0.0084} \\
					bookmark          & 0.3086$\pm$0.0059$\bullet$ & 0.3043$\pm$0.0065$\bullet$ & 0.3030$\pm$0.0059$\bullet$ & 0.3054$\pm$0.0064$\bullet$ & 0.3051$\pm$0.0061$\bullet$ & 0.3161$\pm$0.0064$\bullet$ & 0.3804$\pm$0.0060$\bullet$ & 0.4024$\pm$0.0088 & \bf{0.4080$\pm$0.0078} \\
					corel5k           & 0.2892$\pm$0.0029$\bullet$ & 0.2900$\pm$0.0035$\bullet$ & 0.2908$\pm$0.0039$\bullet$ & 0.2932$\pm$0.0030$\bullet$ & 0.2916$\pm$0.0035$\bullet$ & 0.2925$\pm$0.0035$\bullet$ & 0.2995$\pm$0.0057 & 0.0900$\pm$0.0076$\bullet$ & \bf{0.3028$\pm$0.0070} \\
					eurlex-dc          & 0.3982$\pm$0.0315$\bullet$ & 0.4031$\pm$0.0304$\bullet$ & 0.4031$\pm$0.0304$\bullet$ & 0.4511$\pm$0.0240$\bullet$ & 0.5031$\pm$0.0305$\bullet$ & 0.6118$\pm$0.0254$\bullet$ & 0.6911$\pm$0.0215$\bullet$ & 0.4616$\pm$0.0089$\bullet$ & \bf{0.7588$\pm$0.0144} \\
					espgame           & 0.2171$\pm$0.0081 & 0.2171$\pm$0.0081 & 0.2173$\pm$0.0081 & 0.2175$\pm$0.0082 & 0.2169$\pm$0.0081 & 0.2169$\pm$0.0081 & 0.2175$\pm$0.0083 & 0.0141$\pm$0.0006$\bullet$ & \bf{0.2177$\pm$0.0080} \\
					Delicious        & 0.3338$\pm$0.0029$\bullet$ & 0.3443$\pm$0.0024$\bullet$ & 0.3450$\pm$0.0025$\bullet$ & 0.3337$\pm$0.0030$\bullet$ & 0.3337$\pm$0.0029$\bullet$ & 0.3485$\pm$0.0028$\bullet$ & 0.3514$\pm$0.0025 & 0.2101$\pm$0.0039$\bullet$ & \bf{0.3543$\pm$0.0030} \\ 
					Mediamill        & 0.7193$\pm$0.0031$\bullet$ & 0.7217$\pm$0.0033$\bullet$ & 0.7200$\pm$0.0031$\bullet$ & 0.7218$\pm$0.0035$\bullet$ & 0.7195$\pm$0.0034$\bullet$ & 0.7171$\pm$0.0035$\bullet$ & 0.7193$\pm$0.0031$\bullet$ & 0.5389$\pm$0.0096$\bullet$ & \bf{0.7302$\pm$0.0031} \\ \midrule\midrule
					
					& \multicolumn{9}{c}{micro-F1 $\uparrow$}  \\ \midrule
					Methods   & ORI            & PLST           & CPLST          & DMLR           & FaIE           & CMLL$_{\bm{y}}$          & MDDM           & POP             & CMLL           \\\hline
					plant     & 0.2644$\pm$0.0312 & 0.2644$\pm$0.0312 & 0.2646$\pm$0.0288 & 0.2460$\pm$0.0256$\bullet$ & 0.2663$\pm$0.0319 & 0.2991$\pm$0.0458 & 0.2652$\pm$0.0322 & 0.2222$\pm$0.0247$\bullet$ & \bf{0.2993$\pm$0.0190} \\
					msra      & 0.6471$\pm$0.0134$\bullet$ & 0.6644$\pm$0.0076$\bullet$ & 0.6606$\pm$0.0126$\bullet$ & 0.6605$\pm$0.0125$\bullet$ & 0.6641$\pm$0.0118$\bullet$ & 0.6677$\pm$0.0118 & 0.6757$\pm$0.0081 & 0.5942$\pm$0.0135$\bullet$ & \bf{0.6817$\pm$0.0096} \\
					enron     & 0.4045$\pm$0.0057$\bullet$ & 0.4480$\pm$0.0105$\bullet$ & 0.4473$\pm$0.0138$\bullet$ & 0.4437$\pm$0.0121$\bullet$ & 0.4579$\pm$0.0109$\bullet$ & 0.4995$\pm$0.0119$\bullet$ & 0.5278$\pm$0.0057 & 0.4913$\pm$0.0410 & \bf{0.5335$\pm$0.0147} \\
					llog      & 0.1390$\pm$0.0094$\bullet$ & 0.1710$\pm$0.0177$\bullet$ & 0.1726$\pm$0.0178$\bullet$ & 0.1754$\pm$0.0180$\bullet$ & 0.1724$\pm$0.0196$\bullet$ & 0.1926$\pm$0.0223$\bullet$ & 0.2289$\pm$0.0114 & 0.0942$\pm$0.0292$\bullet$ & \bf{0.2483$\pm$0.0175} \\
					bibtext   & \bf{0.3942$\pm$0.0107}$\circ$ & 0.3910$\pm$0.0101$\circ$ & 0.3939$\pm$0.0117$\circ$ & 0.3921$\pm$0.0106$\circ$ & 0.3914$\pm$0.0099$\circ$ & 0.3764$\pm$0.0072$\bullet$ & 0.3940$\pm$0.0102$\circ$ & 0.3566$\pm$0.0039$\bullet$ & 0.3683$\pm$0.0104 \\
					eurlex-sm & 0.1181$\pm$0.0073$\bullet$ & 0.1225$\pm$0.0078$\bullet$ & 0.1225$\pm$0.0078$\bullet$ & 0.1204$\pm$0.0072$\bullet$ & 0.2220$\pm$0.0080$\bullet$ & 0.2327$\pm$0.0094$\bullet$ & 0.3235$\pm$0.0079 & \bf{0.3393$\pm$0.0152} & 0.3350$\pm$0.0125 \\
					bookmark  & 0.1616$\pm$0.0094$\bullet$ & 0.1879$\pm$0.0080$\bullet$ & 0.1882$\pm$0.0073$\bullet$ & 0.1927$\pm$0.0091$\bullet$ & 0.1879$\pm$0.0081$\bullet$ & 0.2218$\pm$0.0100$\bullet$ & 0.2163$\pm$0.0085$\bullet$ & 0.2176$\pm$0.0060$\bullet$  & \bf{0.2386$\pm$0.0092} \\
					corel5k   & 0.1032$\pm$0.0050$\bullet$ & 0.1002$\pm$0.0064$\bullet$ & 0.0998$\pm$0.0070$\bullet$ & 0.1017$\pm$0.0047$\bullet$ & \bf{0.1523$\pm$0.0051}$\circ$ & 0.1258$\pm$0.0044 & 0.1201$\pm$0.0046 & 0.1456$\pm$0.0084$\circ$  & 0.1266$\pm$0.0048 \\
					eurlex-dc  & 0.0527$\pm$0.0035$\bullet$ & 0.0545$\pm$0.0035$\bullet$ & 0.0545$\pm$0.0035$\bullet$ & 0.0889$\pm$0.0057$\bullet$ & 0.1745$\pm$0.0039$\bullet$ & 0.2486$\pm$0.0107$\bullet$ & 0.2588$\pm$0.0034$\bullet$ & 0.2251$\pm$0.0015$\bullet$  & \bf{0.3735$\pm$0.0183} \\
					espgame   & 0.0863$\pm$0.0054 & 0.0863$\pm$0.0055 & 0.0858$\pm$0.0048 & 0.0859$\pm$0.0050 & 0.0861$\pm$0.0052 & \bf{0.0863$\pm$0.0051} & 0.0833$\pm$0.0036 & 0.0098$\pm$0.0003$\bullet$ & 0.0860$\pm$0.0055 \\
					Delicious  & 0.1614$\pm$0.0033$\bullet$ & 0.1639$\pm$0.0041$\bullet$ & 0.1620$\pm$0.0041$\bullet$ & 0.1615$\pm$0.0033$\bullet$ & 0.1608$\pm$0.0032$\bullet$ & 0.1635$\pm$0.0033$\bullet$ & 0.1607$\pm$0.0030$\bullet$ & \bf{0.2120$\pm$0.0031}$\circ$  & 0.2002$\pm$0.0032  \\
					Mediamill  & 0.5315$\pm$0.0021$\bullet$ & 0.5351$\pm$0.0022$\bullet$ & 0.5353$\pm$0.0021$\bullet$ & 0.5354$\pm$0.0022$\bullet$ & 0.5356$\pm$0.0022$\bullet$ & 0.5367$\pm$0.0023$\bullet$ & 0.5315$\pm$0.0020$\bullet$ & 0.4682$\pm$0.0043$\bullet$  & \bf{0.5427$\pm$0.0019}  \\ \midrule\midrule
					
					& \multicolumn{9}{c}{Ranking Loss $\downarrow$}  \\ \midrule
					Methods & ORI & PLST & CPLST & DMLR & FaIE & CMLL$_{\bm{y}}$ & MDDM & POP & CMLL \\ \hline
					plant        & 0.3292$\pm$0.0166$\bullet$ & 0.3109$\pm$0.0193$\bullet$ & 0.3261$\pm$0.0174$\bullet$ & 0.3276$\pm$0.0239$\bullet$ & 0.3250$\pm$0.0189$\bullet$ & 0.2594$\pm$0.0221$\bullet$ & 0.2123$\pm$0.0240 & 0.2441$\pm$0.0567 & \bf{0.2099$\pm$0.0156} \\
					msra         & 0.1938$\pm$0.0103$\bullet$ & 0.1677$\pm$0.0105$\bullet$ & 0.1742$\pm$0.0118$\bullet$ & 0.1663$\pm$0.0116$\bullet$ & 0.1642$\pm$0.0109$\bullet$ & 0.1611$\pm$0.0105$\bullet$ & 0.1531$\pm$0.0061 & 0.2208$\pm$0.0117$\bullet$ & \bf{0.1463$\pm$0.0081} \\
					enron        & 0.2672$\pm$0.0108$\bullet$ & 0.2596$\pm$0.0102$\bullet$ & 0.2592$\pm$0.0110$\bullet$ & 0.2615$\pm$0.0119$\bullet$ & 0.2642$\pm$0.0071$\bullet$ & 0.1475$\pm$0.0104$\bullet$ & 0.1382$\pm$0.0064$\bullet$ & 0.1255$\pm$0.0145 & \bf{0.1209$\pm$0.0093} \\
					llog         & 0.2567$\pm$0.0306$\bullet$ & 0.2610$\pm$0.0276$\bullet$ & 0.2637$\pm$0.0306$\bullet$ & 0.2585$\pm$0.0271$\bullet$ & 0.2625$\pm$0.0295$\bullet$ & 0.1538$\pm$0.0283$\bullet$ & 0.1661$\pm$0.0151$\bullet$ & 0.1345$\pm$0.0385 & \bf{0.1163$\pm$0.0158} \\
					bibtext      & 0.1325$\pm$0.0082$\bullet$ & 0.1319$\pm$0.0080$\bullet$ & 0.1319$\pm$0.0073$\bullet$ & 0.1318$\pm$0.0079$\bullet$ & 0.1319$\pm$0.0080$\bullet$ & 0.0873$\pm$0.0069 & 0.1020$\pm$0.0081$\bullet$ & \bf{0.0421$\pm$0.0071}$\circ$ & 0.0834$\pm$0.0062 \\
					eurlex-sm    & 0.2396$\pm$0.0107$\bullet$ & 0.2386$\pm$0.0088$\bullet$ & 0.2386$\pm$0.0087$\bullet$ & 0.2397$\pm$0.0103$\bullet$ & 0.1504$\pm$0.0102$\bullet$ & 0.0887$\pm$0.0083$\bullet$ & 0.0524$\pm$0.0119 & 0.0979$\pm$0.0125$\bullet$ & \bf{0.0481$\pm$0.0022} \\
					bookmark     & 0.2563$\pm$0.0072$\bullet$ & 0.2577$\pm$0.0066$\bullet$ & 0.2581$\pm$0.0060$\bullet$ & 0.2571$\pm$0.0057$\bullet$ & 0.2602$\pm$0.0071$\bullet$ & 0.1710$\pm$0.0068 & 0.2044$\pm$0.0071$\bullet$ & \bf{0.1493$\pm$0.0042} & 0.1642$\pm$0.0032 \\
					corel5k      & 0.2096$\pm$0.0044$\bullet$ & 0.1937$\pm$0.0047 & 0.1957$\pm$0.0053 & 0.1943$\pm$0.0040 & 0.1988$\pm$0.0035$\bullet$ & 0.1944$\pm$0.0045 & 0.1964$\pm$0.0070 & 0.4630$\pm$0.0079$\bullet$ & \bf{0.1880$\pm$0.0059} \\
					eurlex-dc     & 0.1841$\pm$0.0110$\bullet$ & 0.1838$\pm$0.0095$\bullet$ & 0.1839$\pm$0.0096$\bullet$ & 0.1943$\pm$0.0126$\bullet$ & 0.0941$\pm$0.0111$\bullet$ & 0.0442$\pm$0.0112 & 0.0477$\pm$0.0089 & 0.1486$\pm$0.0053$\bullet$ & \bf{0.0390$\pm$0.0026} \\
					espgame      & 0.2439$\pm$0.0024$\bullet$ & 0.2436$\pm$0.0026$\bullet$ & 0.2386$\pm$0.0035$\bullet$ & 0.2422$\pm$0.0025$\bullet$ & 0.2450$\pm$0.0030$\bullet$ & 0.2445$\pm$0.0027$\bullet$ & \bf{0.1926$\pm$0.0032} & 0.2858$\pm$0.0035$\bullet$ & 0.1932$\pm$0.0025 \\ 
					Delicious      & 0.1755$\pm$0.0026 & 0.1710$\pm$0.0010$\bullet$ & 0.1654$\pm$0.0008 & 0.1656$\pm$0.0026 & 0.1634$\pm$0.0024 & 0.1681$\pm$0.0023 & 0.1652$\pm$0.0025 & 0.3608$\pm$0.0043$\bullet$ & \bf{0.1651$\pm$0.0024}  \\ 
					Mediamill      & 0.0587$\pm$0.0008 & 0.0589$\pm$0.0007$\bullet$ & 0.0599$\pm$0.0006$\bullet$ & 0.0590$\pm$0.0008$\bullet$ & 0.0598$\pm$0.0010$\bullet$ & 0.0585$\pm$0.0009 & 0.0587$\pm$0.0008 & 0.2272$\pm$0.0120$\bullet$ & \bf{0.0576$\pm$0.0009}  \\ \midrule\midrule
					
					& \multicolumn{9}{c}{One Error $\downarrow$}  \\ \midrule
					Methods & ORI & PLST & CPLST & DMLR & FaIE & CMLL$_{\bm{y}}$ & MDDM & POP & CMLL \\ \hline
					plant     & 0.7099$\pm$0.0337$\bullet$ & 0.7089$\pm$0.0350$\bullet$ & 0.7058$\pm$0.0374$\bullet$ & 0.6986$\pm$0.0335$\bullet$ & 0.7058$\pm$0.0365$\bullet$ & 0.6547$\pm$0.0393 & 0.6362$\pm$0.0389 & 0.6649$\pm$0.0820 & \bf{0.6270$\pm$0.0252} \\
					msra      & 0.1304$\pm$0.0166$\bullet$ & 0.1015$\pm$0.0205$\bullet$ & 0.0972$\pm$0.0097$\bullet$ & 0.0791$\pm$0.0166$\bullet$ & 0.0844$\pm$0.0167$\bullet$ & 0.0796$\pm$0.0174 & 0.0646$\pm$0.0160 & \bf{0.0507$\pm$0.0129} & 0.0576$\pm$0.0142 \\
					enron     & 0.4454$\pm$0.0234$\bullet$ & 0.4471$\pm$0.0209$\bullet$ & 0.4483$\pm$0.0223$\bullet$ & 0.4318$\pm$0.0187$\bullet$ & 0.4295$\pm$0.0221$\bullet$ & 0.2930$\pm$0.0236$\bullet$ & 0.2524$\pm$0.0308 & 0.3535$\pm$0.2583 & \bf{0.2462$\pm$0.0215} \\
					llog      & 0.8526$\pm$0.0302$\bullet$ & 0.8451$\pm$0.0334$\bullet$ & 0.8476$\pm$0.0353$\bullet$ & 0.8418$\pm$0.0428$\bullet$ & 0.8460$\pm$0.0388$\bullet$ & 0.7371$\pm$0.0423 & 0.7294$\pm$0.0282 & 0.9900$\pm$0.0096$\bullet$ & \bf{0.6898$\pm$0.0352} \\
					bibtext   & 0.3950$\pm$0.0134$\bullet$ & 0.3996$\pm$0.0088$\bullet$ & 0.3952$\pm$0.0087$\bullet$ & 0.3978$\pm$0.0102$\bullet$ & 0.3998$\pm$0.0084$\bullet$ & 0.3728$\pm$0.0082 & 0.3542$\pm$0.0128 & \bf{0.3522$\pm$0.0094} & 0.3656$\pm$0.0085 \\
					eurlex-sm & 0.6600$\pm$0.0253$\bullet$ & 0.6576$\pm$0.0239$\bullet$ & 0.6576$\pm$0.0239$\bullet$ & 0.6592$\pm$0.0253$\bullet$ & 0.6586$\pm$0.0259$\bullet$ & 0.4059$\pm$0.0267$\bullet$ & 0.2626$\pm$0.0234$\bullet$ & 0.6326$\pm$0.0186$\bullet$ & \bf{0.2306$\pm$0.0105} \\
					bookmark  & 0.7156$\pm$0.0051$\bullet$ & 0.7188$\pm$0.0098$\bullet$ & 0.7234$\pm$0.0062$\bullet$ & 0.7200$\pm$0.0072$\bullet$ & 0.7186$\pm$0.0061$\bullet$ & 0.6650$\pm$0.0095$\bullet$ & 0.6642$\pm$0.0066$\bullet$ & \bf{0.5800$\pm$0.0142} & 0.6198$\pm$0.0151 \\
					corel5k   & 0.6464$\pm$0.0084 & 0.6532$\pm$0.0093 & 0.6534$\pm$0.0086 & 0.6448$\pm$0.0100 & 0.6468$\pm$0.0107 & 0.6454$\pm$0.0098 & 0.6442$\pm$0.0114 & 0.8620$\pm$0.0110$\bullet$ & \bf{0.6370$\pm$0.0158} \\
					eurlex-dc  & 0.6944$\pm$0.0329$\bullet$ & 0.6914$\pm$0.0339$\bullet$ & 0.6914$\pm$0.0339$\bullet$ & 0.6140$\pm$0.0266$\bullet$ & 0.5916$\pm$0.0332$\bullet$ & 0.4458$\pm$0.0131$\bullet$ & 0.3450$\pm$0.0246$\bullet$ & 0.7312$\pm$0.0062$\bullet$ & \bf{0.3028$\pm$0.0189} \\
					espgame   & 0.5622$\pm$0.0166 & 0.5622$\pm$0.0166 & 0.5624$\pm$0.0173 & 0.5618$\pm$0.0168 & 0.5622$\pm$0.0166 & 0.5626$\pm$0.0162 & 0.5624$\pm$0.0161 & 0.9911$\pm$0.0031$\bullet$ & \bf{0.5604$\pm$0.0152} \\ 
					Delicious  & 0.3687$\pm$0.0127$\bullet$ & 0.3691$\pm$0.0093$\bullet$ & 0.3688$\pm$0.0086$\bullet$ & 0.3692$\pm$0.0125$\bullet$ & 0.3510$\pm$0.0123 & 0.3542$\pm$0.0122 & 0.3583$\pm$0.0131 & 0.4069$\pm$0.0162$\bullet$ & \bf{0.3386$\pm$0.0135} \\ 
					Mediamill  & 0.1337$\pm$0.0032 & 0.1322$\pm$0.0040 & 0.1320$\pm$0.0036 & 0.1317$\pm$0.0045 & 0.1319$\pm$0.0044 & 0.1315$\pm$0.0037 & 0.1337$\pm$0.0032 & 0.1374$\pm$0.0035$\bullet$ & \bf{0.1307$\pm$0.0035} \\ \midrule\midrule
					
					& \multicolumn{9}{c}{Precision$@3$ $\uparrow$}  \\ \hline
					Methods & ORI & PLST & CPLST & DMLR & FaIE & CMLL$_{\bm{y}}$ & MDDM & POP & CMLL \\ \hline
					Delicious & 0.5676$\pm$0.0055$\bullet$ & 0.5613$\pm$0.0042$\bullet$ & 0.5617$\pm$0.0046$\bullet$ & 0.5672$\pm$0.0052$\bullet$ & 0.5666$\pm$0.0050$\bullet$ & 0.5737$\pm$0.0051 & 0.5774$\pm$0.0051 & 0.4350$\pm$0.0100$\bullet$ & \bf{0.5873$\pm$0.0053} \\ 
					Mediamill & 0.6682$\pm$0.0036 & 0.6700$\pm$0.0033 & 0.6697$\pm$0.0031 & 0.6699$\pm$0.0033 & 0.6697$\pm$0.0034 & 0.6692$\pm$0.0039 & 0.6682$\pm$0.0035 & \bf{0.7633$\pm$0.0053}$\circ$ & 0.6722$\pm$0.0032 \\  \midrule\midrule
					
					& \multicolumn{9}{c}{nDCG$@3$ $\uparrow$}  \\ \hline
					Methods & ORI & PLST & CPLST & DMLR & FaIE & CMLL$_{\bm{y}}$ & MDDM & POP & CMLL \\ \hline
					Delicious & 0.5805$\pm$0.0069$\bullet$ & 0.5833$\pm$0.0047 & 0.5836$\pm$0.0050 & 0.5801$\pm$0.0066$\bullet$ & 0.5791$\pm$0.0063$\bullet$ & 0.5764$\pm$0.0065$\bullet$ & 0.5805$\pm$0.0067$\bullet$ & 0.4471$\pm$0.0113$\bullet$ & \bf{0.5904$\pm$0.0067} \\ 
					Mediamill & 0.7508$\pm$0.0036 & 0.7527$\pm$0.0037 & 0.7525$\pm$0.0034 & 0.7527$\pm$0.0038 & 0.7525$\pm$0.0039 & 0.7521$\pm$0.0041 & 0.7508$\pm$0.0036 & \bf{0.8234$\pm$0.0051}$\circ$ & 0.7536$\pm$0.0034 \\ \bottomrule
			\end{tabular}}
		\end{minipage}
	\end{center}
\end{table*}

\begin{table*} [tp]
	\huge
	\begin{center}
		\caption{Experimental results of k-CMLL with baselines.} \label{tb:kernel}
		\begin{minipage}{1\linewidth}
			\centering
			\vskip 0.15in
			\resizebox{\textwidth}{9cm}{
				\begin{tabular}{c|ccccccccc}
					\toprule
					& \multicolumn{8}{c}{Average Precision $\uparrow$}  \\ \midrule 
					Methods & k-ORI & k-CPLST & k-DMLR & k-FaIE & k-CMLL$_{\bm{y}}$ & k-MDDM & C2AE & k-CMLL \\ \hline
					plant             & 0.5894$\pm$0.0385 & 0.5907$\pm$0.0375 & 0.5985$\pm$0.0383 & 0.5917$\pm$0.0388 & 0.5928$\pm$0.0387 & 0.6185$\pm$0.0250 & 0.6277$\pm$0.0280 & \bf{0.6460$\pm$0.0396} \\
					msra              & 0.8087$\pm$0.0107 & 0.8090$\pm$0.0105 & 0.8282$\pm$0.0073 & \bf{0.8231$\pm$0.0090} & 0.8174$\pm$0.0068 & 0.8197$\pm$0.0111 & 0.8135$\pm$0.0100 & 0.8209$\pm$0.0103 \\
					enron             & 0.7001$\pm$0.0180 & 0.7005$\pm$0.0177 & 0.7001$\pm$0.0180 & 0.6722$\pm$0.0166 & 0.6930$\pm$0.0155 & 0.7091$\pm$0.0151 & 0.6856$\pm$0.0535 & \bf{0.7125$\pm$0.0116} \\
					llog              & 0.4269$\pm$0.0253$\bullet$ & 0.4269$\pm$0.0253$\bullet$ & 0.4269$\pm$0.0273$\bullet$ & 0.4276$\pm$0.0251$\bullet$ & 0.4290$\pm$0.0254$\bullet$ & 0.4675$\pm$0.0280 & 0.3770$\pm$0.0130$\bullet$ & \bf{0.4755$\pm$0.0207} \\
					bibtext           & 0.5957$\pm$0.0051$\bullet$ & \bf{0.6220$\pm$0.0069}$\circ$ & 0.5948$\pm$0.0064$\bullet$ & 0.5969$\pm$0.0058$\bullet$ & 0.5970$\pm$0.0059$\bullet$ & 0.6030$\pm$0.0060 & 0.6204$\pm$0.0072$\circ$ & 0.6060$\pm$0.0042 \\
					eurlex-sm         & 0.8011$\pm$0.0157$\bullet$ & 0.8006$\pm$0.0170$\bullet$ & 0.8011$\pm$0.0157$\bullet$ & 0.8112$\pm$0.0158$\bullet$ & 0.8010$\pm$0.0157$\bullet$ & 0.8060$\pm$0.0161 & 0.7773$\pm$0.0359$\bullet$ & \bf{0.8265$\pm$0.0150} \\
					bookmark          & 0.4067$\pm$0.0088$\bullet$ & 0.4067$\pm$0.0088$\bullet$ & 0.4129$\pm$0.0105$\bullet$ & 0.4070$\pm$0.0089$\bullet$ & 0.4072$\pm$0.0088$\bullet$ & 0.4288$\pm$0.0052$\bullet$ & 0.3999$\pm$0.0106$\bullet$ & \bf{0.4582$\pm$0.0074} \\
					corel5k           & 0.3035$\pm$0.0050$\bullet$ & 0.3036$\pm$0.0050$\bullet$ & 0.3083$\pm$0.0055$\bullet$ & 0.3038$\pm$0.0046$\bullet$ & 0.3034$\pm$0.0049$\bullet$ & 0.3307$\pm$0.0060 & 0.3169$\pm$0.0019$\bullet$ & \bf{0.3321$\pm$0.0084} \\
					eurlex-dc          & 0.7578$\pm$0.0079$\bullet$ & 0.7547$\pm$0.0076$\bullet$ & 0.7576$\pm$0.0079$\bullet$ & 0.7585$\pm$0.0077$\bullet$ & 0.7580$\pm$0.0077$\bullet$ & 0.7521$\pm$0.0102$\bullet$ & 0.7411$\pm$0.0180$\bullet$ & \bf{0.7799$\pm$0.0130} \\
					espgame           & 0.2298$\pm$0.0065 & 0.2298$\pm$0.0064 & 0.2298$\pm$0.0065 & 0.2293$\pm$0.0065 & 0.2293$\pm$0.0063 & 0.2336$\pm$0.0069 & 0.2286$\pm$0.0022 & \bf{0.2346$\pm$0.0072} \\
					Delicious         & 0.3576$\pm$0.0198$\bullet$ & 0.3567$\pm$0.0099$\bullet$ & 0.3702$\pm$0.0174$\bullet$ & 0.3527$\pm$0.0202$\bullet$ & 0.3587$\pm$0.0184$\bullet$ & 0.3742$\pm$0.0236 & 0.3596$\pm$0.0025$\bullet$ & \bf{0.3863$\pm$0.0041} \\
					Mediamill         & 0.7226$\pm$0.1789 & 0.7213$\pm$0.0282$\bullet$ & 0.7696$\pm$0.1142 & 0.7063$\pm$0.0399$\bullet$ & 0.7811$\pm$0.1248 & \bf{0.7857$\pm$0.0864} & 0.6861$\pm$0.0073$\bullet$ & 0.7802$\pm$0.0711 \\ \midrule \midrule 
					
					& \multicolumn{9}{c}{micro-F1 $\uparrow$}  \\ \midrule 
					Methods & k-ORI & k-CPLST & k-DMLR & k-FaIE & k-CMLL$_{\bm{y}}$ & k-MDDM & C2AE & k-CMLL \\ \hline
					plant     & 0.3237$\pm$0.0447 & 0.3268$\pm$0.0432 & 0.3284$\pm$0.0479 & 0.3261$\pm$0.0468 & 0.3280$\pm$0.0432 & 0.3430$\pm$0.0454 & \bf{0.3680$\pm$0.034} & 0.3526$\pm$0.0294 \\
					msra      & 0.6711$\pm$0.0083$\bullet$ & 0.6712$\pm$0.0084$\bullet$ & 0.6837$\pm$0.0081$\bullet$ & 0.6840$\pm$0.0088$\bullet$ & 0.6831$\pm$0.0074$\bullet$ & 0.6889$\pm$0.0114 & 0.6708$\pm$0.0084$\bullet$ & \bf{0.7004$\pm$0.0114} \\
					enron     & 0.5849$\pm$0.0062$\bullet$ & 0.5852$\pm$0.0064$\bullet$ & 0.5849$\pm$0.0062$\bullet$ & 0.5857$\pm$0.0060$\bullet$ & 0.5851$\pm$0.0060$\bullet$ & 0.6037$\pm$0.0119$\bullet$ & 0.6512$\pm$0.0310 & \bf{0.6582$\pm$0.0115} \\
					llog      & 0.1512$\pm$0.0183 & 0.1512$\pm$0.0183 & 0.1512$\pm$0.0182 & 0.1513$\pm$0.0185 & 0.1612$\pm$0.0184 & 0.1566$\pm$0.0247 & \bf{0.2829$\pm$0.0199}$\circ$ & 0.1733$\pm$0.0150 \\
					bibtext   & 0.3512$\pm$0.0075$\bullet$ & 0.3680$\pm$0.0104$\bullet$ & 0.3489$\pm$0.0064$\bullet$ & 0.3520$\pm$0.0055$\bullet$ & 0.3521$\pm$0.0055$\bullet$ & 0.3985$\pm$0.0069 & 0.3996$\pm$0.0081 & \bf{0.4069$\pm$0.0075} \\
					eurlex-sm & 0.5580$\pm$0.0205$\bullet$ & 0.5578$\pm$0.0205$\bullet$ & 0.5580$\pm$0.0205$\bullet$ & 0.5586$\pm$0.0204$\bullet$ & 0.5581$\pm$0.0205$\bullet$ & 0.6564$\pm$0.0159 & 0.6061$\pm$0.0106$\bullet$ & \bf{0.6555$\pm$0.0164} \\
					bookmark  & 0.2019$\pm$0.0057$\bullet$ & 0.2019$\pm$0.0057$\bullet$ & 0.2268$\pm$0.0075$\bullet$ & 0.2020$\pm$0.0056$\bullet$ & 0.2027$\pm$0.0057$\bullet$ & 0.2106$\pm$0.0063$\bullet$ & \bf{0.2657$\pm$0.0103} & 0.2378$\pm$0.0053 \\
					corel5k   & 0.1146$\pm$0.0056$\bullet$ & 0.1146$\pm$0.0056$\bullet$ & 0.1166$\pm$0.0050$\bullet$ & 0.1149$\pm$0.0054$\bullet$ & 0.1147$\pm$0.0057$\bullet$ & 0.1431$\pm$0.0012$\bullet$ & 0.1685$\pm$0.0057 & \bf{0.1702$\pm$0.0064} \\
					eurlex-dc  & 0.4648$\pm$0.0160$\bullet$ & 0.4647$\pm$0.0160$\bullet$ & 0.4663$\pm$0.0161$\bullet$ & 0.4659$\pm$0.0161$\bullet$ & 0.4651$\pm$0.0161$\bullet$ & 0.5489$\pm$0.0199 & 0.4847$\pm$0.0015$\bullet$ & \bf{0.5554$\pm$0.0181} \\
					espgame   & 0.1039$\pm$0.0049 & 0.1040$\pm$0.0047 & 0.1039$\pm$0.0049 & 0.1040$\pm$0.0048 & 0.1039$\pm$0.0049 & 0.1062$\pm$0.0064 & \bf{0.1178$\pm$0.0131}$\circ$ & 0.1063$\pm$0.0078 \\
					Delicious & 0.1795$\pm$0.0133$\bullet$ & 0.1780$\pm$0.0073$\bullet$ & 0.1876$\pm$0.0162$\bullet$ & 0.1851$\pm$0.0199$\bullet$ & 0.2050$\pm$0.0220$\bullet$ & 0.1900$\pm$0.0034$\bullet$ & \bf{0.3416$\pm$0.0019}$\circ$ & 0.2361$\pm$0.0029$\bullet$ \\
					Mediamill & 0.5405$\pm$0.0983$\bullet$ & 0.5415$\pm$0.0382$\bullet$ & 0.5611$\pm$0.0675 & 0.5331$\pm$0.0263$\bullet$ & 0.5429$\pm$0.1306 & 0.5913$\pm$0.0247 & 0.5556$\pm$0.0049$\bullet$ & \bf{0.5927$\pm$0.0199} \\ \midrule \midrule 
					
					& \multicolumn{8}{c}{Ranking Loss $\downarrow$}  \\ \midrule 
					Methods & k-ORI & k-CPLST & k-DMLR & k-FaIE & k-CMLL$_{\bm{y}}$ & k-MDDM & C2AE & k-CMLL \\ \hline
					plant        & 0.1771$\pm$0.0312 & 0.1756$\pm$0.0300 & 0.1701$\pm$0.0257 & 0.1761$\pm$0.0314 & 0.1756$\pm$0.0312 & 0.1666$\pm$0.0232 & 0.1578$\pm$0.0194 & \bf{0.1495$\pm$0.0294} \\
					msra         & 0.1435$\pm$0.0075$\bullet$ & 0.1433$\pm$0.0073$\bullet$ & 0.1176$\pm$0.0058$\circ$ & 0.1234$\pm$0.0066 & 0.1182$\pm$0.0058 & 0.1275$\pm$0.0087 & \bf{0.1113$\pm$0.0073}$\circ$ & 0.1289$\pm$0.0088 \\
					enron        & 0.0973$\pm$0.0138$\bullet$ & 0.0969$\pm$0.0138$\bullet$ & 0.0973$\pm$0.0138$\bullet$ & 0.1012$\pm$0.0129$\bullet$ & 0.1000$\pm$0.0128$\bullet$ & 0.1011$\pm$0.0120$\bullet$ & 0.0833$\pm$0.0139 & \bf{0.0816$\pm$0.0037} \\
					llog         & 0.1758$\pm$0.0293$\bullet$ & 0.1757$\pm$0.0292$\bullet$ & 0.1787$\pm$0.0301$\bullet$ & 0.1764$\pm$0.0297$\bullet$ & 0.1762$\pm$0.0279$\bullet$ & 0.1467$\pm$0.0170 & \bf{0.1249$\pm$0.0141} & 0.1362$\pm$0.0149 \\
					bibtext      & 0.0939$\pm$0.0109 & 0.0980$\pm$0.0090$\bullet$ & 0.0933$\pm$0.0105 & 0.0936$\pm$0.0108 & 0.0934$\pm$0.0107 & 0.0819$\pm$0.0083 & \bf{0.0565$\pm$0.0721}$\circ$ & 0.0798$\pm$0.0091 \\
					eurlex-sm    & 0.0221$\pm$0.0084 & 0.0266$\pm$0.0078 & 0.0219$\pm$0.0085 & 0.0219$\pm$0.0084 & 0.0221$\pm$0.0086 & 0.0210$\pm$0.0049 & 0.0240$\pm$0.1171 & \bf{0.0202$\pm$0.0061} \\
					bookmark     & 0.1604$\pm$0.0071$\bullet$ & 0.1604$\pm$0.0071$\bullet$ & 0.1662$\pm$0.0088$\bullet$ & 0.1603$\pm$0.0072$\bullet$ & 0.1509$\pm$0.0075 & 0.1584$\pm$0.0060$\bullet$ & 0.1527$\pm$0.0279 & \bf{0.1447$\pm$0.0050} \\
					corel5k      & 0.1941$\pm$0.0068$\bullet$ & 0.1440$\pm$0.0068 & 0.1566$\pm$0.0078 & 0.1741$\pm$0.0070$\bullet$ & 0.1640$\pm$0.0068 & 0.1937$\pm$0.0052$\bullet$ & \bf{0.1321$\pm$0.0240} & 0.1523$\pm$0.0119\\
					eurlex-dc     & 0.0416$\pm$0.0042 & 0.0374$\pm$0.0053 & 0.0409$\pm$0.0041 & 0.0417$\pm$0.0039 & 0.0417$\pm$0.0042 & 0.0361$\pm$0.0084 & 0.0451$\pm$0.0418 & \bf{0.0357$\pm$0.0046} \\
					espgame      & 0.0253$\pm$0.0065 & 0.0251$\pm$0.0065 & 0.0256$\pm$0.0067 & 0.0255$\pm$0.0065 & 0.0256$\pm$0.0066 & 0.0249$\pm$0.0070 & \bf{0.0206$\pm$0.0123} & 0.0232$\pm$0.0071 \\
					Delicious    & 0.1746$\pm$0.0468 & 0.1630$\pm$0.0135 & 0.1701$\pm$0.0371 & 0.2053$\pm$0.0531 & 0.1684$\pm$0.0560 & 0.1689$\pm$0.0373 & \bf{0.1232$\pm$0.0012} & 0.1599$\pm$0.0318 \\
					Mediamill    & 0.0671$\pm$0.0027$\bullet$ & 0.0644$\pm$0.0051 & 0.0634$\pm$0.0014$\bullet$ & 0.0663$\pm$0.0054 & 0.0624$\pm$0.0056 & 0.0666$\pm$0.0024$\bullet$ & 0.0659$\pm$0.0034$\bullet$ & \bf{0.0599$\pm$0.0021} \\ \midrule \midrule 
					
					& \multicolumn{8}{c}{One Error $\downarrow$}  \\ \midrule 
					Methods & k-ORI & k-CPLST & k-DMLR & k-FaIE & k-CMLL$_{\bm{y}}$ & k-MDDM & C2AE & k-CMLL \\ \hline
					plant     & 0.5857$\pm$0.0457 & 0.5836$\pm$0.0435 & 0.5826$\pm$0.0518 & 0.5867$\pm$0.0463 & 0.5847$\pm$0.0457 & 0.5458$\pm$0.0290 & 0.5581$\pm$0.0473 & \bf{0.5407$\pm$0.0510} \\
					msra      & 0.0653$\pm$0.0198 & 0.0642$\pm$0.0180 & 0.0525$\pm$0.0136 & 0.0553$\pm$0.0057 & 0.0583$\pm$0.0130 & 0.0637$\pm$0.0166 & 0.0617$\pm$0.0267 & \bf{0.0507$\pm$0.0099} \\
					enron     & 0.2290$\pm$0.0165 & 0.2278$\pm$0.0140 & 0.2290$\pm$0.0165 & 0.2454$\pm$0.0205$\bullet$ & 0.2331$\pm$0.0187 & 0.2213$\pm$0.0196 & 0.2562$\pm$0.0571 & \bf{0.2091$\pm$0.0169} \\
					llog      & 0.7145$\pm$0.0272$\bullet$ & 0.7145$\pm$0.0272$\bullet$ & 0.7153$\pm$0.0310$\bullet$ & 0.7145$\pm$0.0272$\bullet$ & 0.7154$\pm$0.0316$\bullet$ & 0.6756$\pm$0.0328 & 0.7642$\pm$0.0149$\bullet$ & \bf{0.6625$\pm$0.0374} \\
					bibtext   & 0.3506$\pm$0.0043$\bullet$ & 0.3496$\pm$0.0166 & 0.3442$\pm$0.0064 & 0.3492$\pm$0.0044$\bullet$ & 0.3392$\pm$0.0044 & 0.3416$\pm$0.0050 & \bf{0.3164$\pm$0.0092} & 0.3368$\pm$0.0090 \\
					eurlex-sm & 0.1754$\pm$0.0198$\bullet$ & 0.1788$\pm$0.0204$\bullet$ & 0.1754$\pm$0.0198$\bullet$ & 0.1756$\pm$0.0200$\bullet$ & 0.1756$\pm$0.0200$\bullet$ & 0.1514$\pm$0.0282 & 0.1293$\pm$0.0116 & \bf{0.1261$\pm$0.0192} \\
					bookmark  & 0.6008$\pm$0.0077$\bullet$ & 0.6008$\pm$0.0077$\bullet$ & 0.5928$\pm$0.0141$\bullet$ & 0.6006$\pm$0.0078$\bullet$ & 0.6000$\pm$0.0080$\bullet$ & 0.5724$\pm$0.0084 & \bf{0.5430$\pm$0.0097} & 0.5616$\pm$0.0090 \\
					corel5k   & 0.6274$\pm$0.0111$\bullet$ & 0.6268$\pm$0.0112$\bullet$ & 0.6286$\pm$0.0087$\bullet$ & 0.6282$\pm$0.0110$\bullet$ & 0.6278$\pm$0.0112$\bullet$ & 0.6138$\pm$0.0138$\bullet$ & \bf{0.5876$\pm$0.0142} & 0.5880$\pm$0.0169 \\
					eurlex-dc  & 0.3098$\pm$0.0112$\bullet$ & 0.3140$\pm$0.0095$\bullet$ & 0.3096$\pm$0.0115$\bullet$ & 0.3088$\pm$0.0110$\bullet$ & 0.3096$\pm$0.0113$\bullet$ & 0.2838$\pm$0.0137 & 0.3012$\pm$0.0208$\bullet$ & \bf{0.2800$\pm$0.0163} \\
					espgame   & 0.5458$\pm$0.0153 & 0.5462$\pm$0.0157 & 0.5460$\pm$0.0134 & 0.5464$\pm$0.0140 & 0.5456$\pm$0.0148 & 0.5363$\pm$0.0123 & 0.5325$\pm$0.0425 & \bf{0.5306$\pm$0.0091} \\
					Delicious & 0.3446$\pm$0.0640 & 0.3320$\pm$0.0321 & 0.3236$\pm$0.0515 & 0.3257$\pm$0.0464 & 0.3336$\pm$0.0258 & 0.3345$\pm$0.1174 & 0.3497$\pm$0.0058 & \bf{0.3260$\pm$0.0617} & \\
					Mediamill & 0.1567$\pm$0.0045$\bullet$ & 0.1386$\pm$0.0029 & 0.1439$\pm$0.0099$\bullet$ & 0.1510$\pm$0.0039$\bullet$ & 0.1457$\pm$0.0020$\bullet$ & 0.1412$\pm$0.0072 & 0.1507$\pm$0.0131 & \bf{0.1384$\pm$0.0040} \\ \midrule \midrule 
					
					& \multicolumn{8}{c}{Precision$@3$ $\uparrow$}  \\ \hline
					Methods & k-ORI & k-CPLST & k-DMLR & k-FaIE & k-CMLL$_{\bm{y}}$ & k-MDDM & C2AE & k-CMLL \\ \hline
					Delicious    & 0.5739$\pm$0.0045$\bullet$ & 0.5814$\pm$0.0029$\bullet$ & 0.5882$\pm$0.0049$\bullet$ & 0.5859$\pm$0.0081$\bullet$ & 0.5887$\pm$0.0026$\bullet$ & 0.5800$\pm$0.0033$\bullet$ & 0.5935$\pm$0.0059$\bullet$ & \bf{0.6090$\pm$0.0022} \\
					Mediamill    & 0.6480$\pm$0.0062$\bullet$ & 0.6576$\pm$0.0080 & 0.6599$\pm$0.0092 & 0.6400$\pm$0.0032$\bullet$ & 0.6582$\pm$0.0080 & 0.6605$\pm$0.0095 & 0.6492$\pm$0.0038$\bullet$ & \bf{0.6622$\pm$0.0072} \\ \midrule\midrule
					
					& \multicolumn{8}{c}{nDCG$@3$ $\uparrow$}  \\ \hline
					& k-ORI & k-CPLST & k-DMLR & k-FaIE & k-CMLL$_{\bm{y}}$ & k-MDDM & C2AE & k-CMLL \\ \hline
					Delicious & 0.5911$\pm$0.0087 & 0.5901$\pm$0.0034$\bullet$ & 0.5970$\pm$0.0042 & 0.5944$\pm$0.0097 & 0.5950$\pm$0.0022$\bullet$ & 0.5952$\pm$0.0058 & 0.6073$\pm$0.0060 & \bf{0.6098$\pm$0.0075} \\
					Mediamill & 0.7360$\pm$0.0098 & 0.7361$\pm$0.0039 & 0.7349$\pm$0.0066 & 0.7342$\pm$0.0043$\bullet$ & 0.7489$\pm$0.0021 & 0.7479$\pm$0.0077 & 0.7285$\pm$0.0034$\bullet$ & \bf{0.7496$\pm$0.0034}  \\ \bottomrule
			\end{tabular}}
		\end{minipage}
	\end{center}
\end{table*}

\begin{figure}[!t]
	\begin{center}
		\begin{minipage}[t]{0.48\columnwidth}
			\centering
			\includegraphics[width=1\columnwidth]{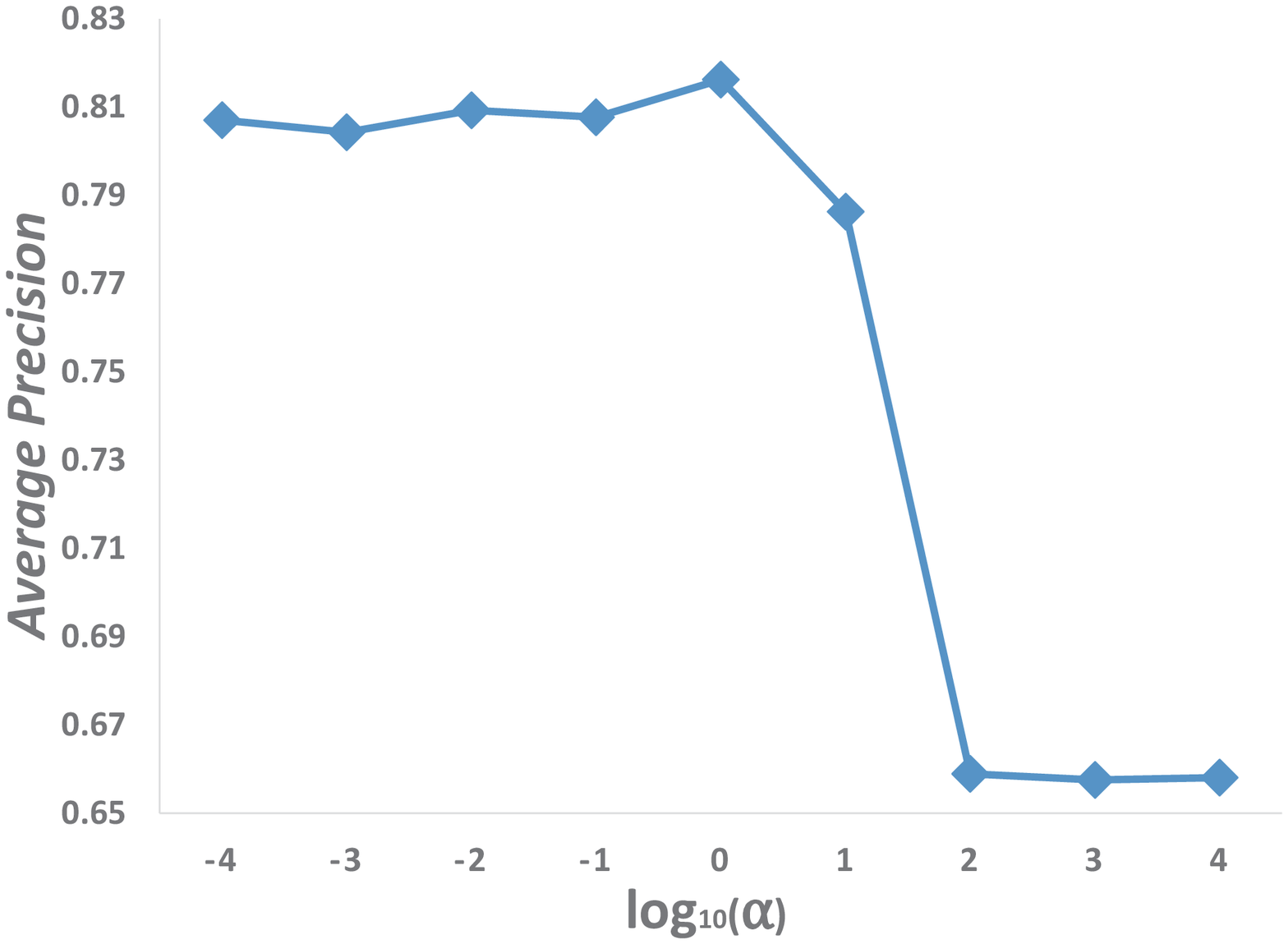}
			{(a) ~Average Precision}
		\end{minipage}
		\begin{minipage}[t]{0.48\columnwidth}
			\centering
			\includegraphics[width=1\columnwidth]{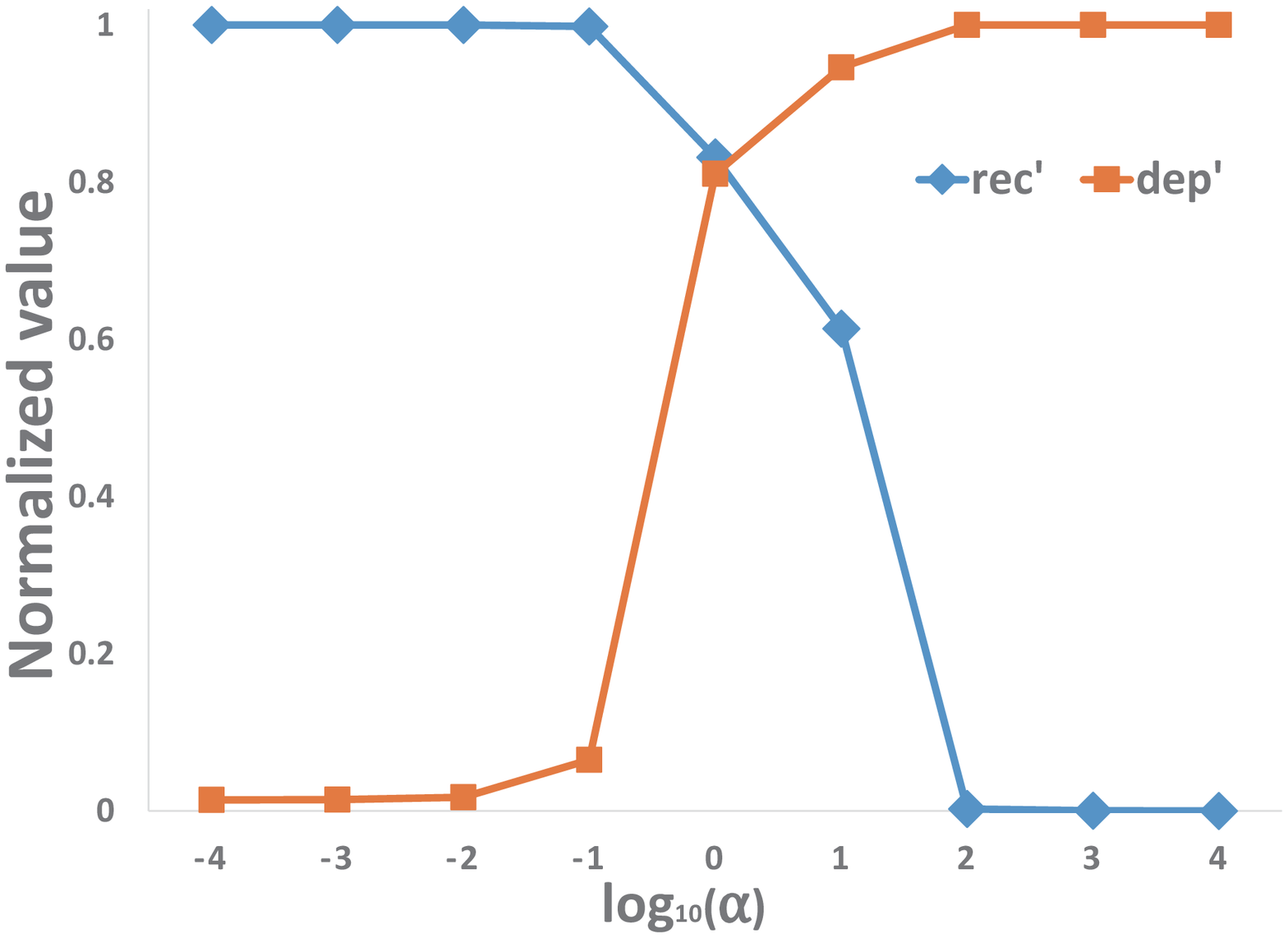}
			{(b) ~$dep'$ and $rec'$}
		\end{minipage}
		\caption{Average precision changes as parameter $\alpha$ varies on msra.}  \label{fig:alpha}
	\end{center}
	\vspace{-1ex}
\end{figure}

\begin{figure*}[!t]
	\begin{center}
		\begin{minipage}[t]{0.96\linewidth}  
			\centering
			\includegraphics[width=0.48\linewidth]{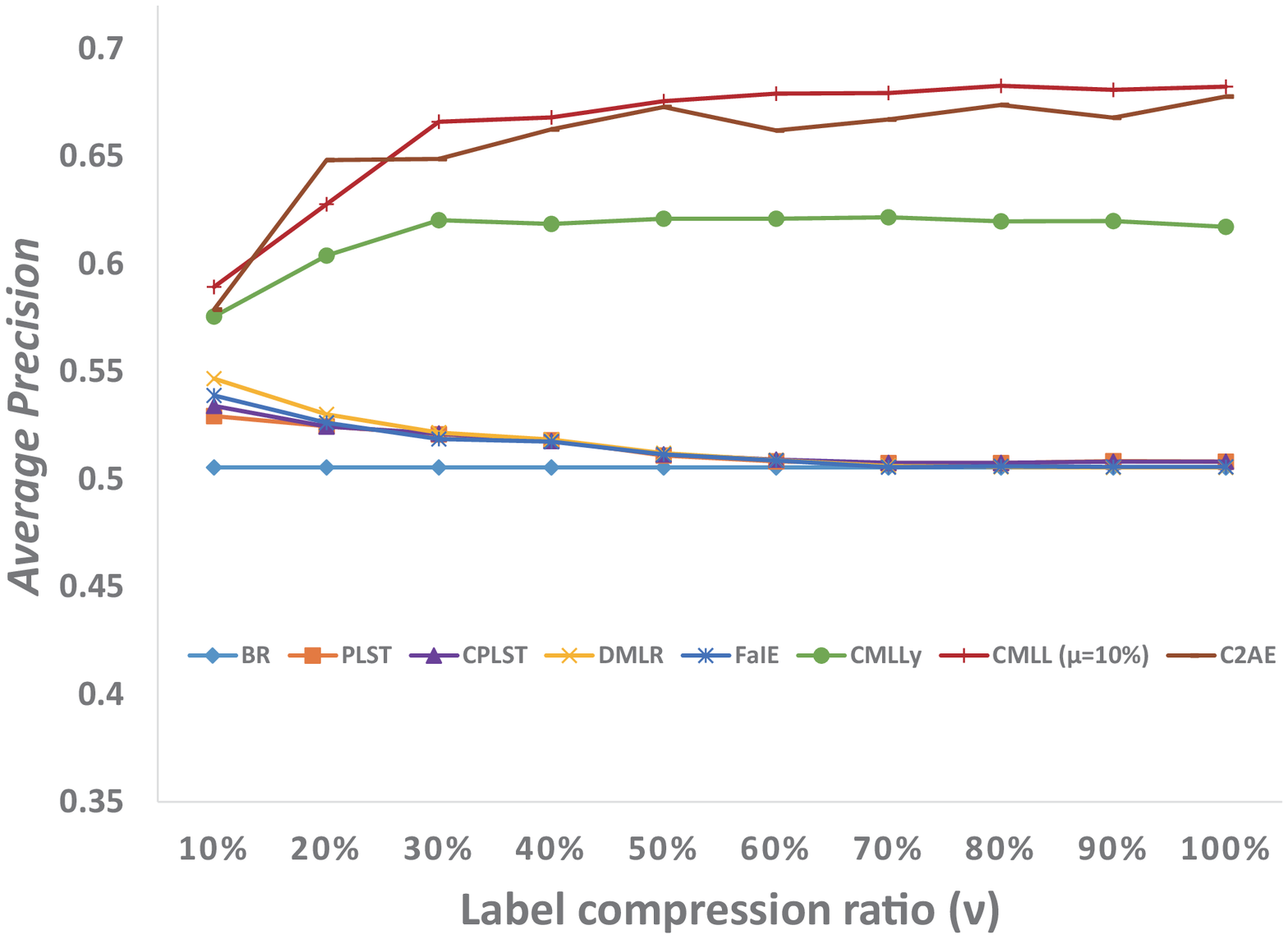}
			\includegraphics[width=0.48\linewidth]{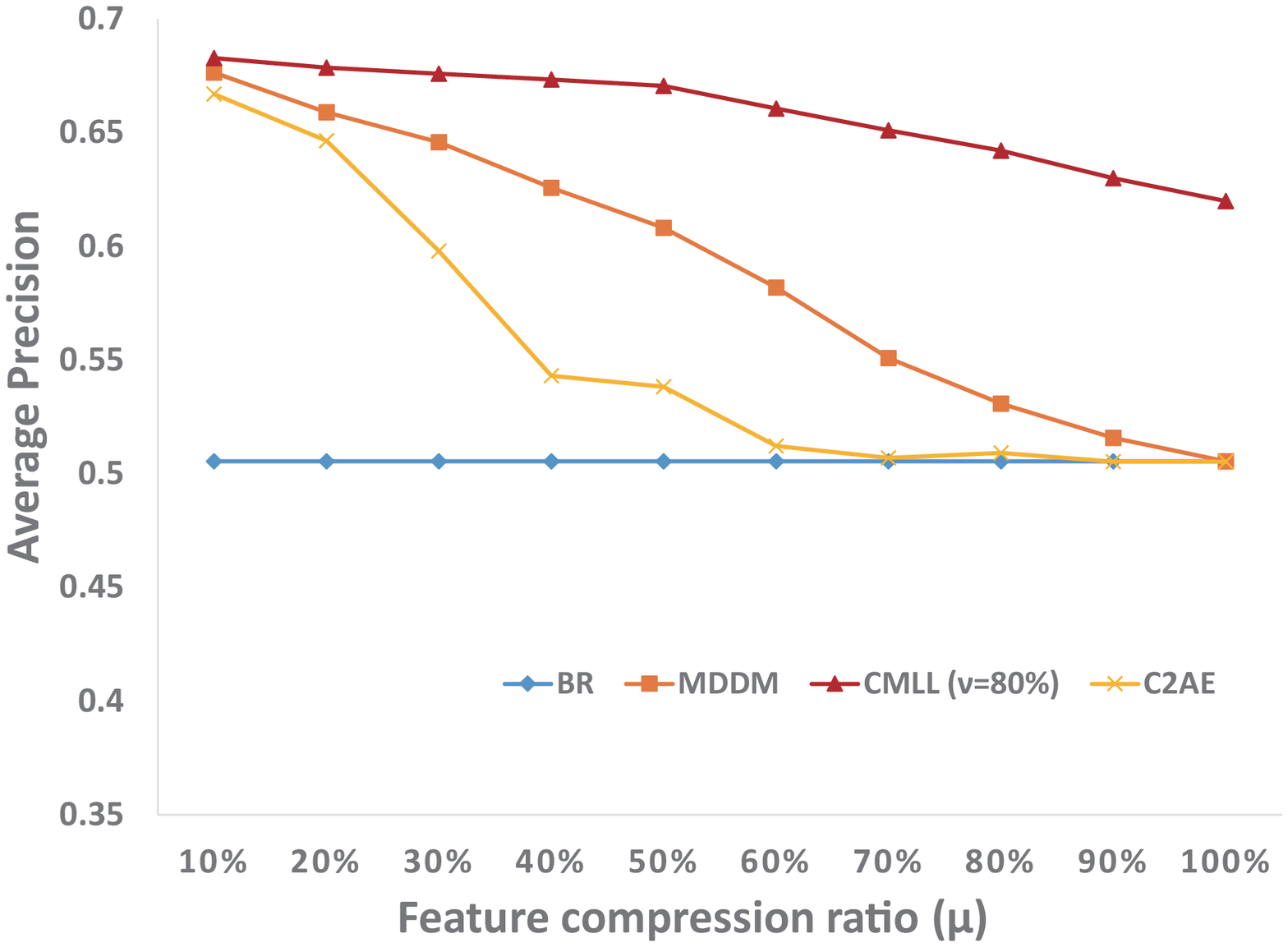}
		\end{minipage}
		\begin{minipage}[t]{0.96\linewidth}  
			\centering
			\includegraphics[width=0.48\linewidth]{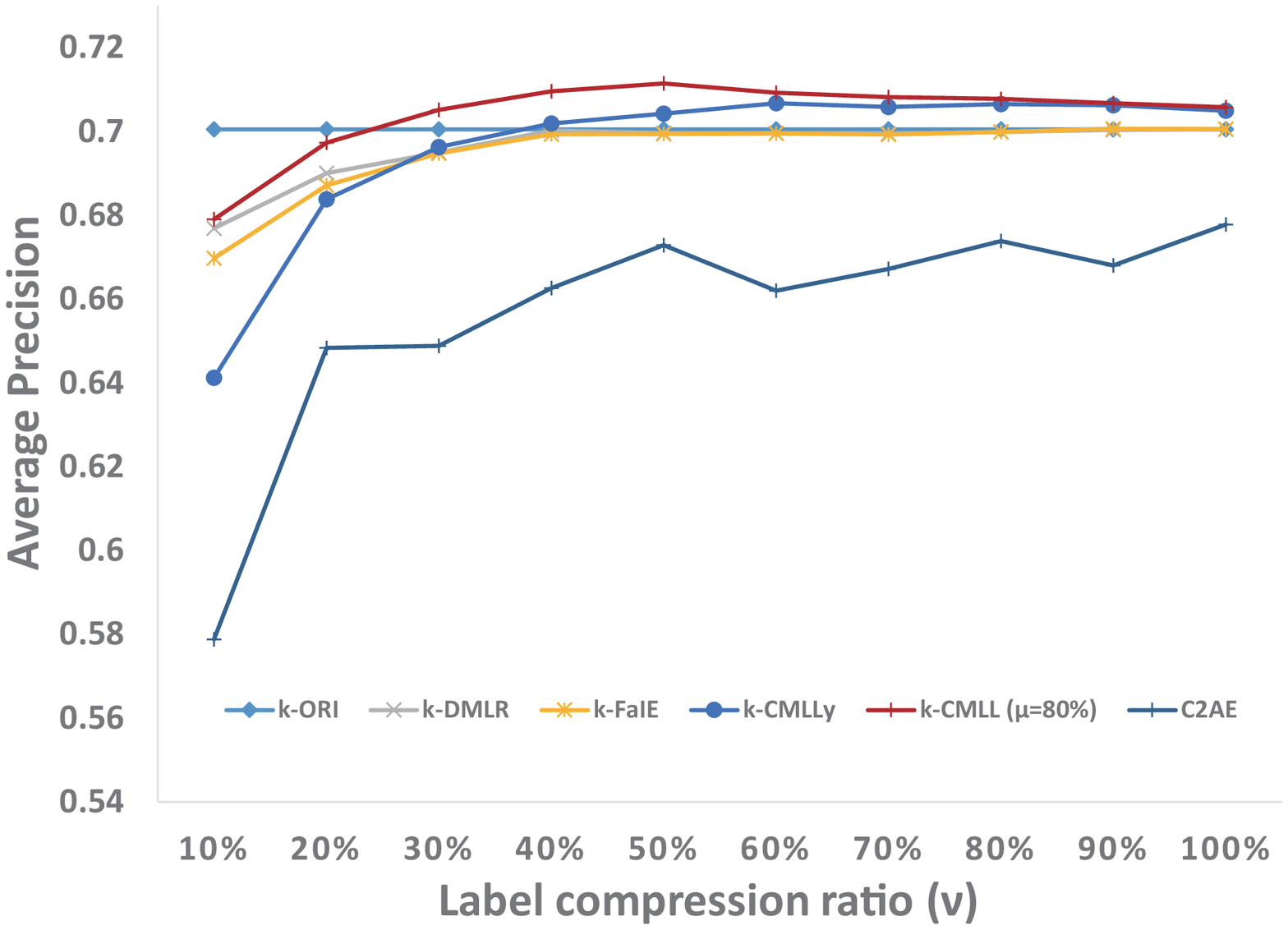}
			\includegraphics[width=0.48\linewidth]{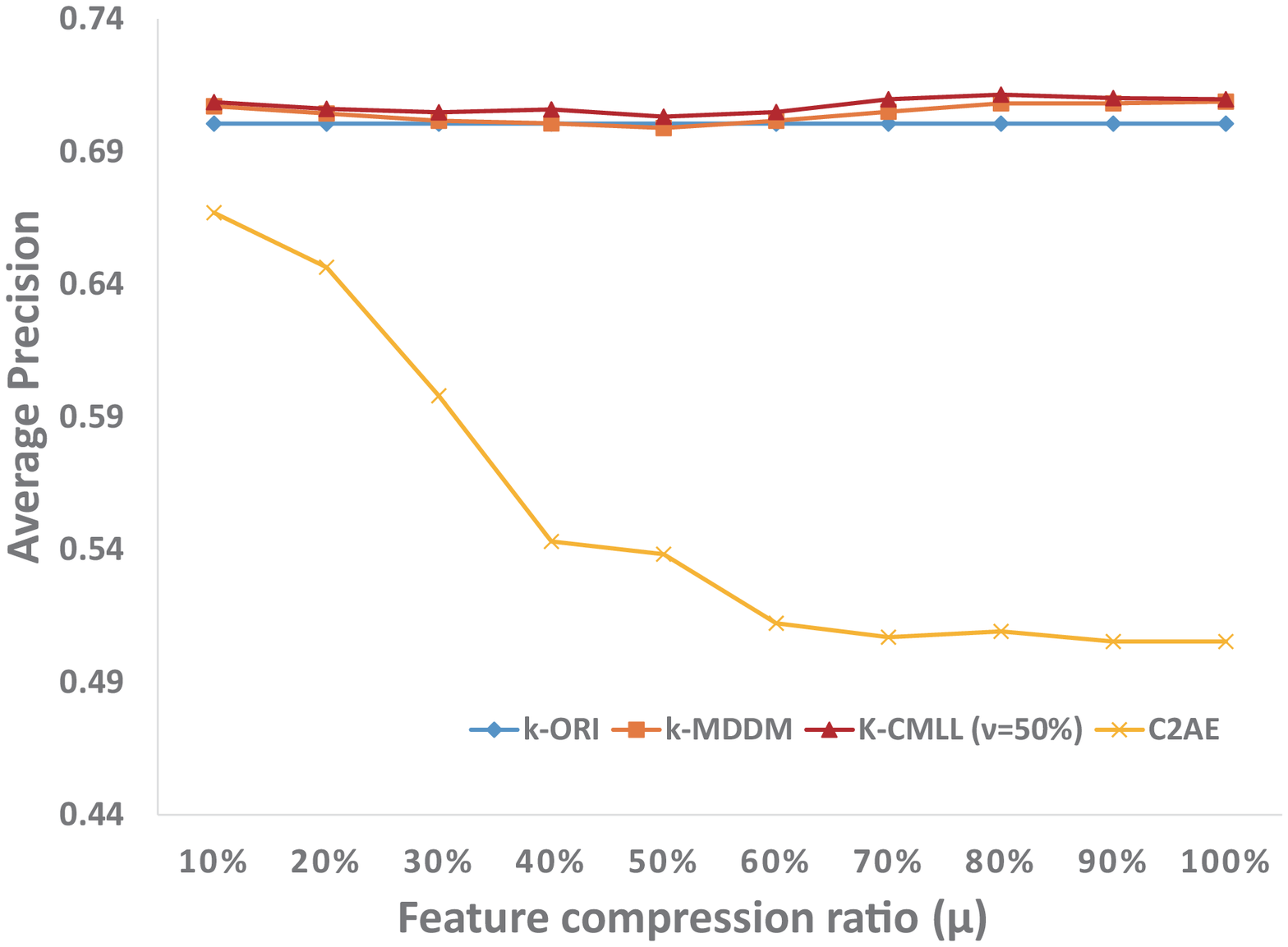}
		\end{minipage}
		
	\end{center}
	
	\begin{center}
		\vspace{-2ex}
		\caption{The average precision curve of moving the compression ratio on enron.}  \label{fig:enron_ap}
	\end{center}
	\vspace{-3ex}
\end{figure*}

\begin{figure*}[!t]
	\vspace{-2ex}
	\begin{center}
		\begin{minipage}[t]{0.48\linewidth}  
			\centering
			\includegraphics[width=1\linewidth]{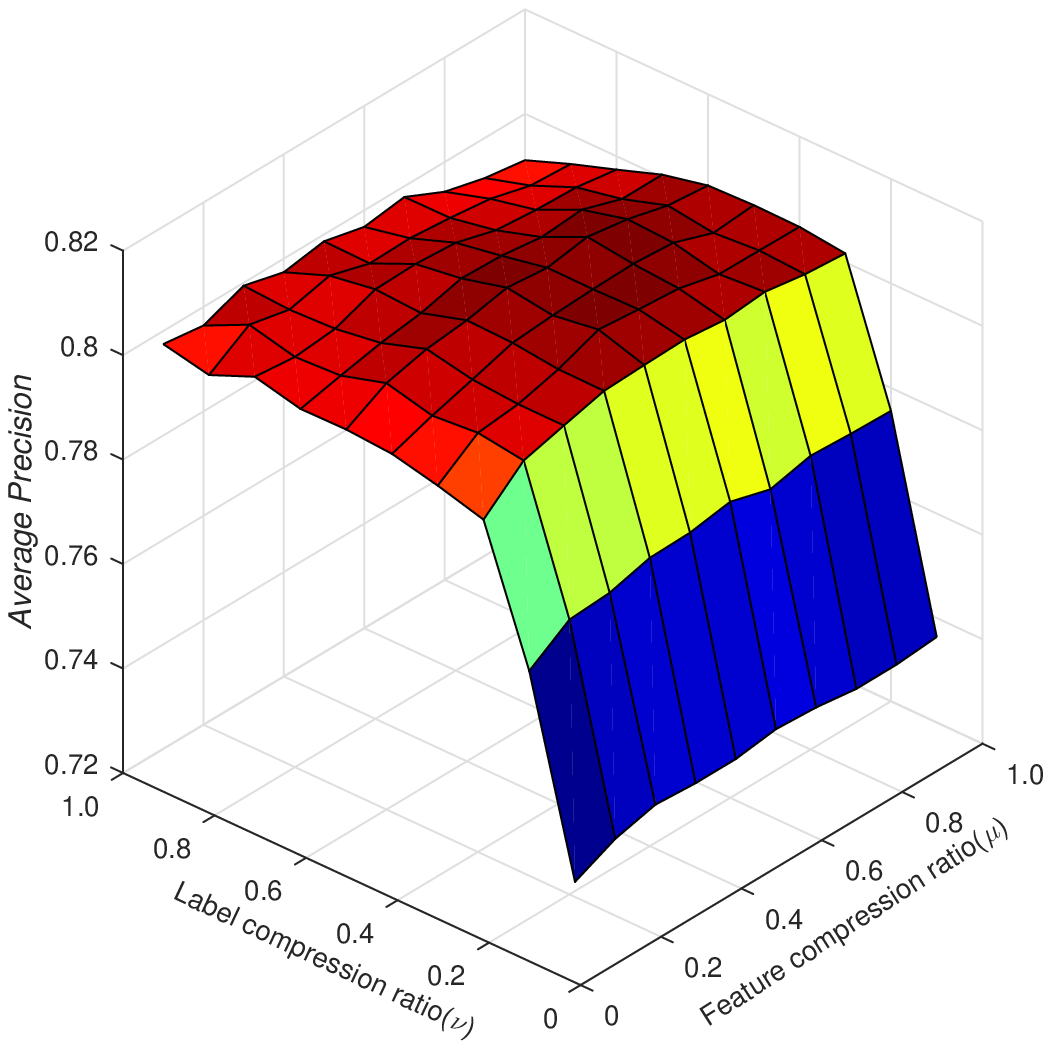}
			{(a) msra}
		\end{minipage}
		\begin{minipage}[t]{0.48\linewidth}  
			\centering
			\includegraphics[width=1\linewidth]{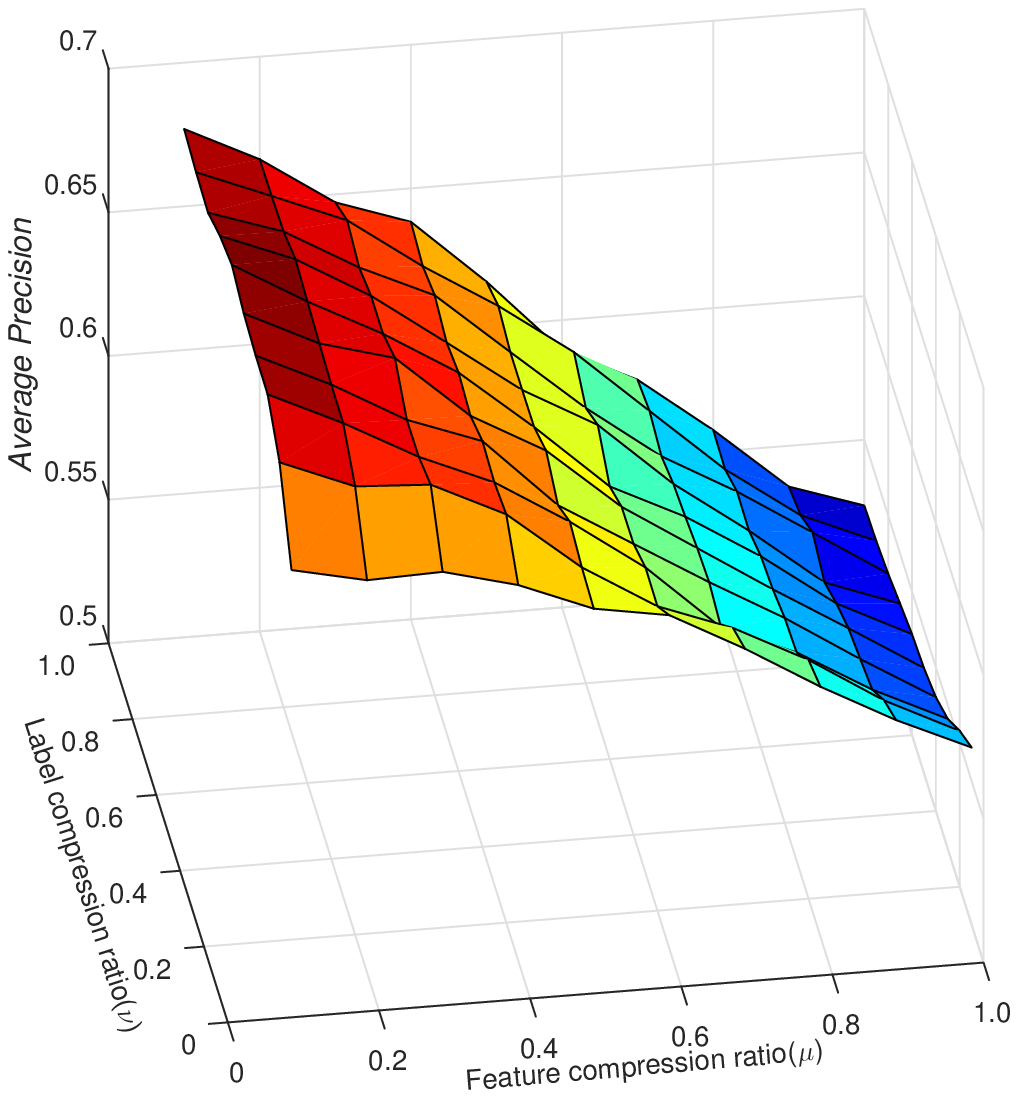}
			{(b) enron}
		\end{minipage}
		\begin{minipage}[t]{0.48\linewidth}  
			\centering
			\includegraphics[width=1\linewidth]{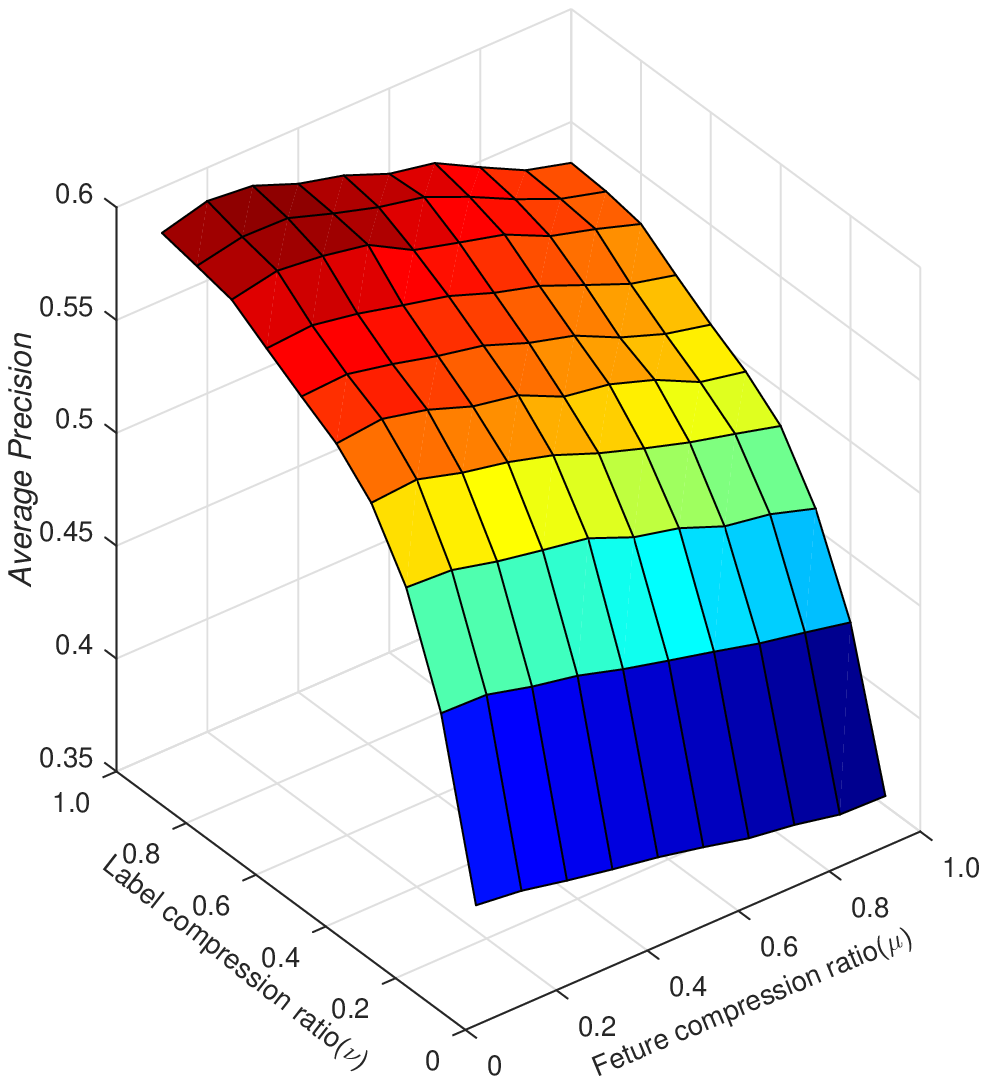}
			{(c) bibtex}
		\end{minipage}
		\begin{minipage}[t]{0.48\linewidth}  
			\centering
			\includegraphics[width=1\linewidth]{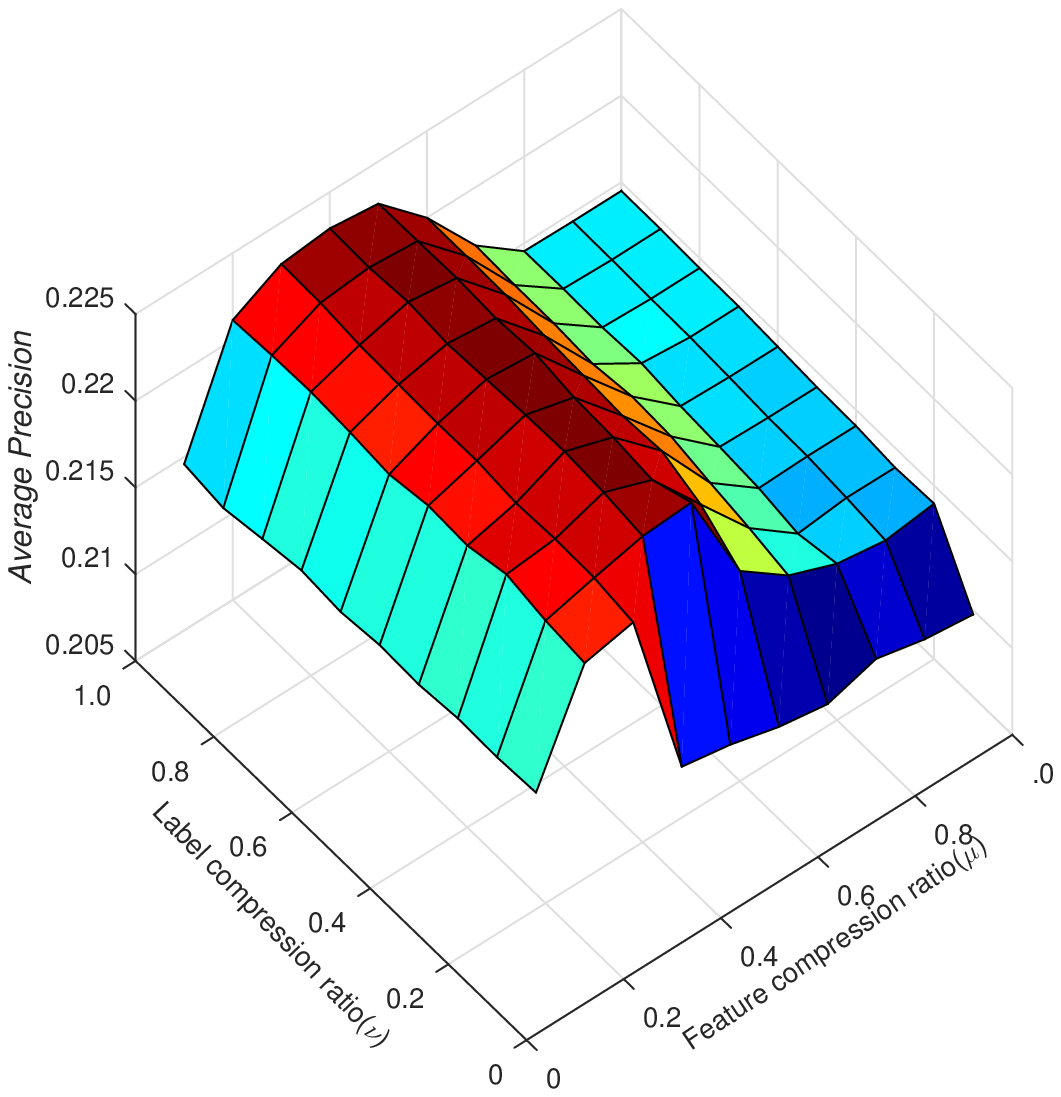}
			{(d) esmpgame}
		\end{minipage}
	\end{center}
	\caption{The spatial graphs of CMLL over various $\mu$ and $\nu$. }  \label{fig:3d}
\end{figure*}

\subsection{Results}

The experimental results of CMLL and k-CMLL compared with the comparing methods are shown in Table~\ref{tb:linear} and Table~\ref{tb:kernel} respectively. For each metric, ``$\downarrow$'' indicates the smaller the better while ``$\uparrow$'' indicates the larger the better. We perform five-fold cross-validation on each dataset, and use paired $t$-test at 10\% significance level. The mean results with standard deviation are reported and the best performance is highlighted in boldface. $\bullet/\circ$ represents whether CMLL or k-CMLL is significantly better/worse than the comparing methods. We can observe that across all metrics, CMLL ranks 1st in the most cases in both linear and non-linear cases.

As Table~\ref{tb:linear} and Table~\ref{tb:kernel} show, with a suitable compression ratio, most embedding methods can achieve better performance than the baseline ORI. This indicates that there are indeed some problems such as sparsity and noise existing in both the original spaces, which leads to performance decline if not tackled properly. With a more compact representation for labels ($\nu \le 100\%$) and features ($\mu \le 100\%$), CMLL performs better than all compared LC methods ($\mu=100\%$) and FE method ($\nu=100\%$) in most cases. By applying the kernel trick to extend methods to their corresponding non-linear version, each LC method actually guides the embedding process of labels with the well transformed rather than the original features implicitly, where k-CMLL still outperforms other methods on the whole.

\subsection{Parameter Sensitivity Analysis}

To explore the influence of balance parameter $\alpha$ in CMLL, we fix $\mu= \nu =50\%$, $\lambda=0$ and run CMLL with $\alpha$ ranging in $\{10^{-4},10^{-3}, ... ,10^3,10^4\}$.
To be convenient, we denote $dep = tr[\bm{V}^t\bm{HXP{P}^{t}{X}^{t}H}\bm{H}\bm{V}]$ and $rec = tr[\bm{V}^{t}\bm{YY}^{t}\bm{V}]$ as the dependence term and recovery term in objective (\ref{eq:dual}).
Dropping the recovery term in (\ref{eq:dual}), we can find the solutions of $\bm{V}$ and $\bm{P}$, and then compute corresponding values of dependence term $dep_{max}$ and recovery term $rec_{min}$. Similarly, by only considering recover term in (\ref{eq:dual}), we can find the solution of $\bm{V}$ and calculate corresponding $dep_{min}$ and $rec_{max}$. To obtain a comprehensive understanding, we normalize the value of two terms by:
\begin{equation}
rec' = \frac{rec - rec_{min}}{rec_{max} - rec_{min}}, ~~
dep' = \frac{dep - dep_{min}}{dep_{max} - dep_{min}}. \nonumber
\end{equation}

The experimental results on msra with average precision are showed in Fig.~\ref{fig:alpha} as an example. The curve in Fig.~\ref{fig:alpha}~(a) indicates that setting $\alpha$ too big or too small will both result in bad performance. And it seems that an unreasonable big $\alpha$ suffers more, which indicates that an encoder with good recovery ability is very important for CMLL. Fig.~\ref{fig:alpha}~(b) provides the explanation for the curve trend in Fig.~\ref{fig:alpha}~(a).
For example, the curve of average precision drops sharply when $log_{10}(\alpha)$ changes from 1 to 2. And the reason is that $dep'$ is already very close to the upper bound when $\alpha=10$. As $\alpha$ further rises to $100$, the increment of $dep'$ is very limited while $rec'$ decreases obviously. This suggests that $\alpha$ do not need to be increased when $dep'$ is close to its maximization, otherwise the decrement of $rec'$ will result in a decline of the performance. Thus we can utilize such plot to guide the tuning process of $\alpha$. To sum up, aiming to get better performance, we should consider an appropriate trade-off between dependence term and recovery term.

\subsection{Analysis on the Compression Ratio}

We investigate how the compression ratio influence the performance, we fix one space changeless and gradually move the compression ration of the other space from 100\% to 10\%, and due to page limitation, we only display the curve of average precision on enron in Fig.~\ref{fig:enron_ap}. It can be seen that CMLL usually achieves a better or comparable performance, and other curves show similar results. The performance curves of metrics also reveal the effectiveness of the proposed method.

The reason for the superiority of CMLL is that it follows the spirit of CL. CMLL links the embedding process of the label space and the feature space to each other and guides each process by another well-disposed space. Instead, most other embedding methods either focus on the embedding of just one space, or guides the embedding process by original problematic space. Therefore, CMLL performs well especially in noisy, redundant and sparse datasets. However, the embedding may bring the loss of information when we compress the dense or non-redundant datasets into a very low dimension.

In reality, traversing every possible pair $(\mu,\nu)$ to lock the best one is unaffordable. Here we give an empirical method for that. We draw the spatial graphs of CMLL for collected datasets over various $\mu$ and $\nu$. And the spatial graphs of some datasets on average precision are displayed in Fig.~\ref{fig:3d} as examples.

It can be seen that, for different $\nu$ on each dataset, the general trends of CMLL over various $\mu$ are almost the same. Knowing this, we can first conduct CMLL with a fixed $\nu$ over various $\mu$, and lock the best $\mu^{*}$ in such situation.
Then we can run CMLL with $\mu^{*}$ over various $\nu$ to find the best $\nu^{*}$. Finally near best ratio pair is given as $(\mu^{*}, \nu^{*})$. One can also try some different starting $\nu$ to make the searching process more precise. In practice, we find that the ratio pair searched by this empirical method can achieve comparable performance to the real best one in most cases. Especially, we find that $\mu$ may have little influence on the performance in some datasets. In other words, a very small compression ratio can perform as well as other ratios in CMLL, which proves the existence of redundancy and shows the superiority of CMLL to reduce the computational and space complexities.

\section{Conclusion}\label{sec:conclude}

In this paper, we provided a different insight into the MLC for fully capturing the high-order correlation between features and labels, named compact learning. We analyzed its rationality and necessity in the situation, where the feature space suffers from redundancy or noise, and meanwhile, the label space is deteriorated by noise or sparsity - frequent occurrences in MLC. Following the spirit of compact learning, a simple yet effective method termed CMLL that is compatible with flexible multi-label classifiers was proposed. By conducting the embedding process of the features and the labels seamlessly, CMLL achieved a more compact representation for both the spaces. We demonstrated through experiments that CMLL can result in significant improvements for the multi-label classification.

As an initial effort towards compact learning, there are several potential ways that the current CMLL can be further improved: (a) Except the linear embedding or its kernel version, other encoding and decoding strategies, such as autoencoders and its extension, are worthwhile to be investigated; (b) Inspired by the manifold learning that the local topological structure can be shared between the feature manifold and the label manifold \cite{multi2016hou}, the structure information could be utilized for CMLL; (c) CMLL provides another possible solution to some weakly supervised learning problem, e.g., the missing label \cite{lv2019weakly} or noisy label \cite{patrini2017making}.

\section*{Acknowledgement}
This research was supported by the National Key Research \& Development Plan of China (No.2017YFB1002801), the National Science Foundation of China (61622203), the Jiangsu Natural Science Funds for Distinguished Young Scholar (BK20140022), the Collaborative Innovation Center of Novel Software Technology and Industrialization, and the Collaborative Innovation Center of Wireless Communications Technology.

\newpage
\bibliographystyle{elsarticle-num}
\bibliography{PR_CMLL}

\end{document}